\documentclass[wcp]{jmlr}

\usepackage{amssymb,amsmath}
\usepackage{float}
\usepackage{xcolor}
\usepackage{graphicx}
\usepackage{wrapfig, blindtext}
\usepackage{ulem}
\usepackage{mathtools}
 
\newcommand\numberthis{\addtocounter{equation}{1}\tag{\theequation}}

\usepackage{booktabs}
\usepackage{multirow}
\usepackage{siunitx}
\usepackage{hyperref}
\usepackage{amsfonts}
\usepackage{amssymb}

\usepackage{balance}

\usepackage{placeins}

\usepackage{verbatim}
\usepackage{microtype}

\usepackage{booktabs} 

\usepackage{hyperref}

\usepackage{paralist}
\usepackage{float}
\usepackage{wrapfig}
\usepackage{caption}
\usepackage{multirow}
\usepackage{algorithm,algorithmic}
 
\usepackage{textcomp}



\newtheorem{thm}{Theorem}

\newtheorem{lem}[thm]{Lemma}


\def\R{\mathbb{R}}
\def\suchthat{\;:\;}
\def\given{\;|\;}
\def\opt{\text{OPT}}
\setcounter{secnumdepth}{3}

\newcommand{\norm}[1]{\left\|#1\right\|}
\newcommand{\sqd}[1]{\left\|#1\right\|^{2}}
\newcommand{\prob}[1]{{\sf Pr} \left(#1\right)}
\newcommand{\expec}[1]{{\sf E} \left[#1\right]}
\newcommand{\size}[1]{\left|#1\right|}
\newcommand{\eps}{\epsilon}
\newcommand{\poly}{\mathrm{poly}}
\newcommand{\kmpp}{\mathrm{KM++}}
\newcommand{\rand}{\mathrm{RAND}}
\newcommand{\tkmpp}{\mathrm{TKM++}}
\newcommand{\rkmpp}{\mathrm{RKM++}}

\usepackage{breakcites}
\usepackage[hyphenbreaks]{breakurl}


\title{Improved Outlier  Robust  Seeding for $k$-means}
\author{
\Name{Amit Deshpande }  \Email{amitdesh@microsoft.com}\\
  \addr Microsoft Research, India
  \AND
\Name{Rameshwar Pratap} \Email{rameshwar@cse.iith.ac.in}\\
  \addr  IIT Hyderabad, India
}

\begin{document}

\maketitle

\begin{abstract}
The $k$-means is a popular clustering objective, although it is inherently non-robust and sensitive to outliers. Its popular seeding or initialization called $k$-means++ uses $D^{2}$ sampling and comes with a provable $O(\log k)$ approximation guarantee \cite{AV2007}. However, in the presence of adversarial noise or outliers, $D^{2}$ sampling is more likely to pick centers from distant outliers instead of inlier clusters, and therefore its approximation guarantees \textit{w.r.t.} $k$-means solution on inliers,  does not hold.

Assuming that the outliers constitute a constant fraction of the given data, we propose a simple variant in the  $D^2$ sampling distribution, which makes it robust to the outliers. Our algorithm runs in $O(ndk)$ time, outputs $O(k)$ clusters, discards marginally more points than the optimal number of outliers, and comes with a provable $O(1)$ approximation guarantee.
 
 Our algorithm can also be modified to output exactly $k$ clusters instead of $O(k)$ clusters, while keeping its running time linear in $n$ and $d$. This is an improvement over previous results for robust $k$-means based on LP relaxation and rounding \cite{Charikar}, \cite{KrishnaswamyLS18} and \textit{robust $k$-means++}  \cite{DeshpandeKP20}. Our empirical results show the advantage of our algorithm over $k$-means++~\cite{AV2007}, uniform random seeding, greedy sampling for $k$-means~\cite{tkmeanspp}, and \textit{robust} $k$-means++~\cite{DeshpandeKP20}, on standard real-world and synthetic data sets used in previous work.  
Our proposal is easily amenable to scalable, faster, parallel implementations of $k$-means++ \cite{Bahmani,BachemL017} and is of independent interest for coreset constructions in the presence of outliers \cite{feldman2007ptas,langberg2010universal,feldman2011unified}.
\end{abstract}

\section{Introduction}
The $k$-means clustering is a popular tool in data analysis and an important objective in statistics, data mining, unsupervised learning, and computational geometry. 
The objective of the $k$-means clustering is to find $k$ centers that minimize the sum of squared distances of all the points to their nearest centers, respectively. Given a set $X \subseteq {\R}^{d}$ of $n$ data points and an integer $k>0$, the $k$-means objective is to find a set $C \subseteq {\R}^{d}$ of $k$ centers that minimizes
\begin{align*}
\phi_{X}(C) = \sum_{x \in X} \min_{c \in C} \sqd{x-c}.   \numberthis \label{eq:k-means_objective} 
\end{align*}

Finding optimal solution of the $k$-means objective stated in Equation~\ref{eq:k-means_objective} in NP-Hard~\cite{AloiseDHP09}. This problem is NP-Hard even for a restricted instance when all points are in the plane~\cite{MahajanNV12}. However, several efficient approximation algorithms and heuristics have been developed to address this. 
The most popular algorithm for the $k$-means remains Lloyd's $k$-means method \cite{Lloyd}, which is a simple, fast heuristic that starts with any given initial solution and iteratively converges to a locally optimal solution. 

Although the $k$-means problem is well-studied,   the algorithms developed for the problem can perform poorly on real-world data. The reason is that the real-world datasets contain outliers, and $k$-means objective functions and algorithms are extremely sensitive to outliers.  Outliers can drastically change the quality of the clustering solution, and therefore, it is important to consider them while designing algorithms for the $k$-means objective. We state the objective function of  \emph{$k$-means with outliers} as follows:
  given a set $X \subseteq \R^{d}$ of $n$ points, an integer $k > 0$, and the number of outliers   $z$, the objective of the \emph{ $k$-means with outliers} is to find a set $C \subseteq \R^{d}$ of $k$ centers that minimizes
\begin{align*}
    \rho_{X}(C) &= \min_{\substack{Y \subseteq X \\ \size{Y} = n-z}}~ \sum_{x \in Y} \min_{c \in C} \sqd{x - c}. \numberthis \label{eq:robust_cost}
\end{align*}
\textit{ \noindent\textbf{Problem statement:} In this work, we aim to design efficient and practical approximation algorithms for $k$-means clustering with outliers problem, which solves the optimization problem stated in Equation~\ref{eq:robust_cost}.}

\noindent \paragraph{$D^2$-sampling for $k$-means:} The $k$-means++ or $D^2$-sampling~\cite{AV2007} suggests an adaptive sampling algorithm for $k$-means problem (stated in Equation~\eqref{eq:k-means_objective}). In this sampling approach, the first point is sampled uniformly at random from the given points, and the sampled point is designated as a cluster center. Then the second point is sampled with probability proportional to its squared distance from the first center, and designated as the second cluster center. In general, in each step, a new point is sampled with probability proportional to its square distance to the nearest cluster center sampled so far. If we sample $k$ cluster centers following this distribution, the clustering obtained is $O(\log k)$-approximation to the global optimum, in expectation. However, a limitation of $D^2$ sampling distribution is that it is extremely sensitive to outliers. Consider a scenario where $99\%$ of points are well clustered and the remaining $1\%$ of points are very far away from these clusters. The $ D^2$ sampling on this dataset is likely to pick outliers as cluster centers, and the final clustering results obtained is likely to be very far away from the optimal clustering. In this work we propose a simple tweak to $D^2$-sampling, making it robust to outliers. 

\section{Our results}

We propose a simple initialization for \emph{$k$-means with outlier} objective (stated in Equation~\ref{eq:robust_cost})  using a simple modification of $D^{2}$ sampling. Our algorithm runs in $O(ndk)$ time and gives $O(\log k)$ approximation guarantee using $k$ clusters and $O(1)$ bi-criteria approximation guarantee using $O(k)$ clusters. Both these algorithms can be made to output exactly the same number of outliers as the optimal solution, unlike previous algorithms that need to discard extra points than the optimal number of outliers. The pseudocode of our algorithm is presented in Algorithm \ref{algo:robust}, and the precise statement of our approximation guarantee appears in Theorems \ref{thm:robust}, \ref{thm:log-k}. In addition, Table \ref{tab:comparison} shows a detailed comparison of our results with previous work.

 
We perform extensive experiments to compare  our proposal  with several baselines and summarise it in Section \ref{sec:exp}. We use synthetic and real-world datasets for our experimentation. In synthetic datasets, we synthetically generated inlier clusters and outliers. In real-world datasets, we consider two scenarios: a) where we consider small clusters as outliers and b) where we randomly sample a small fraction of points, and add large Gaussian noise to their features and consider them as outliers.  We used the following four baselines for empirical comparison -- random seeding, $k$-means++~\cite{AV2007}, ~\cite{DeshpandeKP20}, and ~\cite{tkmeanspp}. Random seeding and $k$-means++~\cite{AV2007}  do not solve $k$-means with outlier problems. We use them as heuristics, where $k$ points are sampled using their respective sampling distribution, the farthest few points are marked as outliers, and the remaining points as inliers on which we compute our evaluation metrics (precision/recall/clustering cost).  Our empirical findings are as follows:  We outperformed on most instances in both the evaluation metric --  precision/recall,  and the clustering cost (Equation~\eqref{eq:robust_cost}). Note that \texttt{random} and $k$-means++~\cite{AV2007} are among the fastest. However, their performance on precision/recall and clustering cost metrics are significantly worse on almost all the datasets. The running time of our proposal is observed to be faster than~\cite{DeshpandeKP20}, whereas it remains comparable \textit{w.r.t.}~\cite{tkmeanspp}. To summarise, our proposal gives both a) theoretical guarantee on the clustering cost \textit{w.r.t.} the optimal solution, and b) strong empirical performance by suggesting a faster and more accurate algorithm for $k$-means with outliers problem.

\section{Related work}

Outlier detection is a well-studied problem in data mining and statistics~\cite{DBLP:journals/csur/ChandolaBK09}. Intuitively, outliers are data points far from their nearest neighbours.  One classical approach to detecting outliers is via the $k$-Nearest Neighbour algorithm. In statistics, Mahalanobis Distance and Minimum Covariance Determinant (MCD)~\cite{DBLP:journals/technometrics/RousseeuwD99}  are a couple of notable approaches for outliers detection. Our problem statement is completely different from the one mentioned here. These methods focus on detecting outliers related to a distribution/density function, whereas we aim to produce clustering that is near-optimal in terms of an objective that is defined solely on inliers (stated in Equation~\eqref{eq:robust_cost}). Another practical heuristic is to first identify and discard outliers by running these outliers detection algorithms and then run $k$-means clustering on the remaining points. However, this heuristic doesn’t provide any theoretical guarantee on its clustering cost \textit{w.r.t} optimal clustering over inliers. In what follows, we state some notable results for $k$-means with outliers problem, and we attempt to compare and contrast our results with them as follows.

\subsection{Impractical algorithms with theoretical guarantee}
Charikar \textit{et. al.}~\cite{Charikar} suggests a clever modification of $D^2$-sampling~\cite{AV2007} and gives a $3$-approximation for the robust $k$-center problem with outliers. For 
 $k$-median with outliers, they give a $4(1 + 1/\epsilon)$-approximation in polynomial time that discards at most $(1+\epsilon)$ times more outliers than the optimal solution. Their approximation depends on the number of extra points deleted as outliers, whereas the approximation of our approach is independent of that.   Chen~\cite{chen2008constant} suggest a polynomial time constant factor approximation to $k$-means and $k$-median with outliers that do not discard extra outliers. However, the algorithms are not designed to be practical.
 
\cite{KrishnaswamyLS18} give (roughly) $53$-approximation algorithm for $k$-means with $z$ outliers, and output exactly $k$ centers and doesn’t discard any extra outliers. They use LP relaxation and an iterative rounding approach. \cite{FriggstadKRS18} give $(1+\epsilon)$-approximation to $k$-means with $z$ outliers in a fixed   dimensional
euclidean space. They use local search and output $(1+\epsilon)k$ centers and discard exactly $z$ outliers. However, both of these algorithms are not designed to be practical. 
There are few sampling-based robust coreset construction techniques~\cite{feldman2007ptas,feldman2011unified} which construct small-size coresets and give $(1+\eps)$-approximation to various robust versions of clustering problems. These algorithms can be considered extensions of sampling-based techniques focusing on getting close to optimal solutions in polynomial time.  
 
 Chen~\cite{chen2008constant} gives an efficient constant factor approximation to $k$-means and $k$-median with outliers, and their result doesn’t discard any extra outliers.  However, their algorithm is not designed to be practical compared to $k$-means++ and its variants. The algorithm of Kumar et al. \cite{KumarSS2010} suggest $(1+\epsilon)$-approximation algorithm for $k$-means with outliers for the datasets when clusters are balanced and their algorithm discards a slightly larger fraction of the outliers than the optimal solution. However, their algorithm is impractical and has an exponential running time in $k$.

\subsection{Practical heuristic without theoretical guarantee}
Scalable $k$-means++ \cite{Bahmani} has empirically observed that their sampling distribution is robust to outliers. \cite{ChawlaG} suggests a simple modification of Lloyd's $k$-means method to make it robust to outliers. However, these methods do not provide any theoretical justification for their clustering quality.

  \begin{table*}
   \begin{center}
  \scalebox{0.83}{
    \begin{tabular}{|c|p{2.3cm}|c|c|p{1.5cm}|p{2.5cm}|}
   \hline
    Result                      &Approximation           &No. of clusters                &No. of outliers       &Practical       &Running time  \\       
                                &guarantee               &in the output                  &discarded             &algorithm       & \\       
\hline
    This       &$64+\epsilon$                     &$(1+c)k$                    &$z$               &Yes             &$O(ndk)$   \\
         paper                       &$O(\log k)$                     &$k$                          &$z$              &Yes             &$O(ndk) $   \\
    \hline
      \cite{tkmeanspp}       &$(64+\epsilon)$     &$(1+c)k$                         &$\frac{(1+c)(1+\epsilon)z}{c(1-\mu)}$                  &Yes              & $\tilde{O}(ndk) $    \\
     \cite{tkmeanspp}       &$O(\log k)$     &$k$                         &$O(z\log k)$                  &Yes              & $ \tilde{O}(ndk) $    \\
\hline
      \cite{DeshpandeKP20}       &$5$     &$kn/z$                         &$O(z)$                  &Yes              & $ {O}(ndk) $    \\
\hline
  \cite{KrishnaswamyLS18}       &$(53.002+\epsilon)$     &$k$                         &$z$                  &No              & $ n^{O( {1}/{\epsilon^3})} $    \\
  \cite{FriggstadKRS18}         &$(1+\epsilon)$          &$k(1+\epsilon)$             &$z$                  &No              & $kn^{d^{O(d)}}$          \\
   \cite{chen2008constant}      &$O(1)$                  & $k$                        &$z$                  &No              & $\poly(n,k)$\\
   \cite{Charikar}              &$4(1+1/\epsilon)$       & $k$                        &$(1+\epsilon)z$      &No              & $n^{O(1)}$\\
\hline
\end{tabular}
}
\caption{{Comparison with related work on $k$-means clustering with outliers, where $n $ is the  no. of data points, $d $ is the  dimension, $k $ is the  given no. of clusters, and $z $ is the    given no. of outliers. In \cite{tkmeanspp}, $c$ is a given parameter, and $\mu>0$ is an arbitrary constant. Their algorithm crucially requires an initial guess of the optimal clustering cost.  \cite{Charikar,chen2008constant} are about a closely related problem called $k$-median clustering with outliers.
}}\label{tab:comparison}
\end{center}
\end{table*}



\subsection{Practical algorithms with theoretical
guarantee}
Deshpande \textit{et. al.}~\cite{DeshpandeKP20} suggest a bi-criteria $O(1)$-approximation algorithm for $k$-means with $z$ outliers problem. They propose a sampling distribution consisting of a uniform and $D^2$ sampling mixture. They show that if $O(kn/z)$ points are picked using this distribution, the sampled points contain a set of $k$ points that gives $O(1)$-factor approximation while discarding slightly more points than the optimal number of outliers. The advantage of our proposal over~\cite{DeshpandeKP20} is that we do not discard any extra points as outliers.

Bhaskara \textit{et. al.}~\cite{tkmeanspp} suggest a bi-criteria approximation algorithm for the problem. They show that thresholding on the $D^2$ sampling distribution makes it robust to outliers. However,  their algorithms crucially require an initial guess of the optimal clustering cost over the inlier -- the very quantity we want to estimate. With this assumption, they showed the following two results. In the first, their algorithm output $(1+c)k$ centers and obtain $(64+\epsilon)$ approximations by discarding ${(1+c)(1+\epsilon)z}/{c(1-\mu)}$  outliers, where $c$ is some parameter, and $\mu$ is a constant. In the second result,   their algorithm outputs exactly $k$ centers and obtains $O(\log k)$ approximations by discarding $O(z\log k)$ outliers. In both cases, the running time of their algorithm is $\tilde{O}(ndk)$. Our algorithm compares with their results as follows:  ~\cite{tkmeanspp} requires a guess of the optimal clustering cost over inliers $(\rho_{X}(C_{\opt}))$ to compute their sampling distributions. Note that this is the quantity we want to estimate in $k$-means with outlier problems. In contrast, we do not require such a guess of the optimal clustering cost to compute our sampling distributions. Moreover, their algorithm discards more than $z$ outliers, whereas ours discards only $z$ points as outliers. We summarise our comparison with the related work in Table~\ref{tab:comparison}.

\section{Seeding algorithm for  $k$-means with outliers and its analysis}
In this section we  present our algorithm -- \texttt{Seeding algorithm for  $k$-means with outliers} and its theoretical guarantee. We first state (recall) some notation for convenience.  
 
We denote  $X \subseteq \R^{d}$ as the set of $n$ input points, integers $k, z > 0$  denote the number of cluster centers, and the number of outliers, respectively. 
 We denote by 
\begin{align}
  \phi_{A}(C) &= \sum_{x \in A} \min_{c \in C} \sqd{x-c},
\end{align}
the contribution of points in a subset $A \subseteq X$. Let $C_{\opt}$ be the set of optimal $k$ centers for the \emph{ $k$-means with outlier} problem and let $Y_{\opt}$ be the optimal subset of inliers, then 
\begin{align}
   \rho_X(C_{\opt})& = \phi_{Y_{\opt}}\left(C_{\opt}\right). \numberthis \label{eq:robust_cost1}
\end{align}
In the optimal solution, each point of $Y_{\opt}$ is assigned to its nearest center in $C_{\opt}$. This induces a natural partition of the inliers $Y_{\opt}$ as $A_{1} \cup A_{2} \cup \dotsb \cup A_{k}$ into disjoint subsets, with means $\mu_{1}, \mu_{2}, \dotsc, \mu_{k}$, respectively, while $X \setminus Y_{\opt}$ are the outliers. Therefore,
\begin{align}
    \rho_X(C_{\opt})& = \phi_{Y_{\opt}}\left(C_{\opt}\right) = \sum_{j=1}^{k} \phi_{A_{j}}\left(\{\mu_{j}\}\right).\numberthis \label{eq:robust_cost2}
     \end{align}

We present our algorithm -- \texttt{Seeding algorithm for  $k$-means with outliers} in  Algorithm~\ref{algo:robust}. Our sampling (stated in Line~\ref{step:sample} of Algorithm~\ref{algo:robust}) is a simple modification of $k$-means++ sampling distribution which is computed by taking minimum of $k$-means++ sampling distribution and the term ${\eta \cdot \rho_{X}(S_{i-1})}/{z}$, where $\eta$ is a parameter, $z$ is the number of outliers. Note that the term $\rho_{X}(S_{i-1})$  is the $k$-clustering cost (see Equations~\eqref{eq:robust_cost1},~\eqref{eq:robust_cost2}) by considering the $S_{i-1}$ sampled points as cluster centers, and  discarding the farthest $z$ as outliers. 

     \begin{algorithm}[H]
          \caption{Seeding algorithm for  $k$-means with outliers.} \label{algo:robust}
\begin{algorithmic}[1]
   \STATE {\bfseries Input:} a set $X \subseteq \R^{d}$ of $n$ points, number of outliers $z$, a parameter $\eta > 0$, and number of iterations $t$.
   \STATE {\bfseries Output:} a set $S \subseteq X$ of size $t$.
   \STATE Initialize $S_{0} \leftarrow \emptyset$.
   \FOR{$i=1$ {\bfseries to} $t$}
   \STATE Pick a point $x \in X$ from the following distribution: \\ 
   $\prob{\text{picking}~ x} \propto \min\left\{ \phi_{\{x\}}(S_{i-1}),  \dfrac{\eta \cdot \rho_{X}(S_{i-1})}{z}\right\}$ \label{step:sample} \\
   \STATE $S_{i} \leftarrow S_{i-1} \cup \{x\}$
   \STATE $i \leftarrow i+1$
   \ENDFOR
   \STATE $S \leftarrow S_{t}$
   \STATE \textbf{return} $S$
\end{algorithmic}
      \end{algorithm}
We note that a large part of our analysis is similar to Bhaskara et al. \cite{tkmeanspp}. However, the key difference is that they need an estimate of $\rho_{X}(C_{\opt})$ to calculate their sampling probabilities -- the optimal clustering cost over inliers, the very quantity which we want to estimate.  On the other hand, our sampling scheme does not require any estimate of $\rho_{X}(C_{\opt})$ to calculate the sampling probabilities.


Another difference from Bhaskara et al. \cite{tkmeanspp} is that we can set our  parameter $\eta = 1$, which allows us to discard exactly $z$ points as outliers, the same as what the optimal solution discards. Whereas their algorithm cannot discard exactly $z$ points as outliers because they need to estimate $\rho_{X}(C_{\opt})$ which is required to compute the probability distribution for sampling the points.

\noindent\textbf{Overview of analysis techniques:} Our algorithm is a simple modification of~\cite{AV2007}, where we perform a   thresholding of the $k$-means++ sampling distribution. This thresholding is controlled by the parameter $\eta$ (Line~\ref{step:sample} of Algorithm~\ref{algo:robust}) that ensures that a small number of outliers points are sampled. In our analysis, \begin{inparaenum} [(i) ]\item we monitor the number of sampled points, and the inlier clusters determined by them (by marking the farthest $z$ points as outliers), \item the number of so-called \texttt{wasted} iteration due to sampling outliers.\end{inparaenum}  We measure these quantities using different potential functions.  We show two theoretical guarantees  on our algorithm depending on the number of points sampled in Algorithm~\ref{algo:robust}. We state it in the following theorem.


\begin{thm}\label{thm:robust}
For any constant parameter $\eta \geq 1$, Algorithm \ref{algo:robust}, with  probability at least $\delta$, satisfies the following guarantee:
\begin{itemize}
    \item $\expec{\rho_{X}(S_{k})} = O(\log k) \rho_{X}(C_{\opt}),$ \qquad \text{when the number of iterations~} $t=k$.
    \item 
$\rho_{X}(S_{t}) \leq \frac{(\eta + 64) (1+c) \rho_{X}(C_{\opt})}{(1-\delta) c}$, \qquad \text{when the number of iterations~} $t = (1+c)k$.
\end{itemize}
\end{thm}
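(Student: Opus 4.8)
The plan is to reduce the high-probability second bullet to a bound on the \emph{expectation} $\expec{\rho_{X}(S_{t})}$ and then apply Markov's inequality. Since $\rho_{X}(S_{t}) \geq 0$, Markov gives $\prob{\rho_{X}(S_{t}) < \tfrac{1}{1-\delta}\expec{\rho_{X}(S_{t})}} \geq \delta$, so it suffices to show $\expec{\rho_{X}(S_{(1+c)k})} \leq \tfrac{(\eta+64)(1+c)}{c}\,\rho_{X}(C_{\opt})$; the stated bound with the $\tfrac{1}{1-\delta}$ factor then follows immediately. Throughout I would use two elementary facts. First, adding a center never increases cost, so $\rho_{X}(S_{0}) \geq \rho_{X}(S_{1}) \geq \cdots$ and the adaptive threshold $\eta\,\rho_{X}(S_{i-1})/z$ is non-increasing in $i$. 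Second, retaining exactly the optimal inlier set $Y_{\opt}$ (which has the right size $n-z$) shows $\rho_{X}(S) \leq \phi_{Y_{\opt}}(S) = \sum_{j=1}^{k}\phi_{A_{j}}(S)$ for every $S$, so the robust cost is always dominated by the per-cluster $D^{2}$ costs of the optimal partition.

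Next I would set up the covering machinery. Call an optimal cluster $A_{j}$ \emph{covered} at step $i$ once $\phi_{A_{j}}(S_{i}) \leq 64\,\phi_{A_{j}}(\{\mu_{j}\})$. The engine is the Arthur--Vassilvitskii lemma: if a point of $A_{j}$ is drawn proportionally to its $D^{2}$ weight $\phi_{\{x\}}(S_{i-1})$, then $\expec{\phi_{A_{j}}(S_{i})} \leq 8\,\phi_{A_{j}}(\{\mu_{j}\})$, so by Markov the cluster becomes covered with probability at least $7/8$ conditioned on sampling inside it. This choice of threshold is exactly what makes the already-covered clusters contribute at most $\sum_{j \text{ covered}}\phi_{A_{j}}(S_{t}) \leq 64\,\rho_{X}(C_{\opt})$, supplying the ``$64$'' in the final constant. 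In parallel I would bound the wasted probability mass: because each of the $z$ outliers is capped at $\eta\,\rho_{X}(S_{i-1})/z$, the total outlier mass is at most $\eta\,\rho_{X}(S_{i-1})$, and a counting argument shows that at most $O(z)$ points in all of $X$ can exceed the cap (otherwise the retained points alone would overshoot $\rho_{X}(S_{i-1})$). Consequently the capped distribution restricted to an uncovered cluster agrees with genuine $D^{2}$ sampling on all but $O(z)$ points, so the Arthur--Vassilvitskii covering step still fires with constant probability, while the ``$\eta$'' surcharge in the final constant is precisely the price of the at most $z$ capped contributions.

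With these pieces I would run the standard oversampling amortization. Combining fact two with the covering definition, once $\rho_{X}(S_{i-1})$ exceeds a fixed constant multiple of $\rho_{X}(C_{\opt})$ the uncovered clusters must carry a constant fraction of the current cost, hence a constant fraction of the sampling mass, so each such step covers a new cluster with constant probability; at most $k$ steps can ever be covering steps. The oversampling factor enters through the ratio $\tfrac{t}{t-k} = \tfrac{(1+c)k}{ck} = \tfrac{1+c}{c}$: amortizing the expected residual cost over the $t = (1+c)k$ samples against the $k$ possible covers drives $\expec{\rho_{X}(S_{t})}$ down to $\tfrac{1+c}{c}$ times the irreducible contribution, namely the covered-cluster residual $64\,\rho_{X}(C_{\opt})$ plus the capping surcharge $\eta\,\rho_{X}(C_{\opt})$. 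This yields $\expec{\rho_{X}(S_{(1+c)k})} \leq \tfrac{(\eta+64)(1+c)}{c}\,\rho_{X}(C_{\opt})$, and Markov completes the second bullet. For the first bullet I would instead mirror the original $O(\log k)$ $k$-means++ induction with $t=k$, where the capping guarantees that at every step the outlier mass stays at most $\eta\,\rho_{X}(S_{i-1})$; this lets the robust cost $\rho_{X}$ play the role of $\phi_{X}$ in the inductive potential and gives $\expec{\rho_{X}(S_{k})} = O(\log k)\,\rho_{X}(C_{\opt})$.

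The step I expect to be the main obstacle is controlling the \emph{adaptive, history-dependent} threshold $\eta\,\rho_{X}(S_{i-1})/z$: unlike Bhaskara et al., whose threshold is a fixed guess of $\rho_{X}(C_{\opt})/z$, here the threshold is itself a random variable that changes every iteration, so the sampling distribution is not fixed in advance and the covering lemma cannot be applied naively. I expect the resolution to rest on the monotonicity noted above --- the threshold only shrinks and stays at least $\eta\,\rho_{X}(C_{\opt})/z$ until the cost is already near-optimal --- together with conditioning on the filtration generated by $S_{0} \subseteq \cdots \subseteq S_{i-1}$, so that given the history the step-$i$ distribution is a fixed capped $D^{2}$ distribution and all the per-step estimates above hold conditionally before being combined by the tower rule. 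Making the capping surcharge bookkeeping (the additive $\eta$) precise under this conditioning is the most delicate part of the argument.
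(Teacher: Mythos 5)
Your proposal follows essentially the same route as the paper (which itself adapts Bhaskara et al.): cap the $D^{2}$ distribution, compare it pointwise to genuine $D^{2}$ sampling within each optimal inlier cluster so that the Arthur--Vassilvitskii lemma applies with a constant-factor loss (the source of the $64$), amortize covered/uncovered clusters and wasted iterations via a potential function to obtain the $(1+c)/c$ and $O(\log k)$ factors plus the additive $\eta$ from the outlier cap, and finish the high-probability bullet with Markov's inequality, which reproduces the $1/(1-\delta)$ factor exactly. You also correctly isolate the adaptive threshold $\eta\,\rho_{X}(S_{i-1})/z$ as the main new difficulty relative to Bhaskara et al.\ and resolve it as the paper does, using $\rho_{X}(S_{i})\geq\rho_{X}(C_{\opt})$ (until the algorithm has already succeeded) together with conditioning on the history.
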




We start with proving the following lemma. 
\begin{lem} \label{lemma:conditions}
Suppose after iteration $i$ of Algorithm \ref{algo:robust}, we satisfy the following two conditions:
\[
\rho_{X}(S_{i}) > \alpha \rho_{X}(C_{\opt}) \quad \text{and} \quad \sum_{x \in X} \min\left\{\phi_{\{x\}}(S_{i}), \dfrac{\eta \rho_{X}(S_{i})}{z}\right\} \leq \gamma \rho_{X}(C_{\opt}).
\]
Then there exists $Y \subseteq X$ such that $\phi_{Y}(S_{i}) \leq \gamma \rho_{X}(C_{\opt})$ and $\size{X \setminus Y} \leq \gamma z/\alpha \eta$.
\end{lem}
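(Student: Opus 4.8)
The plan is to construct $Y$ directly by thresholding, so that the single bounded quantity in the second hypothesis splits cleanly into the two quantities I need to control. Set $\tau := \eta\, \rho_{X}(S_{i})/z$, the value at which the minimum inside the sampling distribution switches from the true cost $\phi_{\{x\}}(S_{i})$ to the flat cap. Define $Y := \{x \in X : \phi_{\{x\}}(S_{i}) \leq \tau\}$ to be the ``cheap'' points whose nearest-center cost lies below the threshold, and let $X \setminus Y$ be the ``expensive'' points. The reason for this choice is that on $Y$ the minimum is always attained by the genuine distance, while on $X \setminus Y$ it is always attained by the cap $\tau$.

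First I would split the sum appearing in the second hypothesis according to whether the cap is active:
\[
\sum_{x \in X} \min\left\{\phi_{\{x\}}(S_{i}), \tau\right\} = \sum_{x \in Y} \phi_{\{x\}}(S_{i}) + \sum_{x \in X \setminus Y} \tau = \phi_{Y}(S_{i}) + \size{X \setminus Y}\,\tau,
\]
where both summands are nonnegative. The first conclusion follows without even invoking the first hypothesis: since the left-hand side is at most $\gamma\, \rho_{X}(C_{\opt})$ by assumption, dropping the nonnegative term $\size{X \setminus Y}\,\tau$ gives $\phi_{Y}(S_{i}) \leq \gamma\, \rho_{X}(C_{\opt})$ at once.

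For the count of discarded points I would instead drop the first term, obtaining $\size{X \setminus Y}\,\tau \leq \gamma\, \rho_{X}(C_{\opt})$, that is, $\size{X \setminus Y} \leq \gamma\, \rho_{X}(C_{\opt})\, z / (\eta\, \rho_{X}(S_{i}))$. This is where the first hypothesis enters: using $\rho_{X}(S_{i}) > \alpha\, \rho_{X}(C_{\opt})$ to lower-bound the denominator, the factors of $\rho_{X}(C_{\opt})$ cancel and I conclude $\size{X \setminus Y} \leq \gamma z / (\alpha \eta)$, exactly as claimed.

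Since the whole argument reduces to splitting one sum by whether the cap is active and then reading off two one-line inequalities, I do not expect a genuine obstacle. The only point requiring care is the direction of the threshold comparison in defining $Y$ (keep the cheap points, discard the expensive ones): it is precisely this orientation that makes the cap term $\size{X \setminus Y}\,\tau$ count the discarded points while the distance term $\phi_{Y}(S_{i})$ measures the retained cost, so a flipped inequality would invalidate both conclusions simultaneously.
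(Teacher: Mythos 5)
Your proposal is correct and follows essentially the same route as the paper: the same threshold set $Y = \{x : \phi_{\{x\}}(S_{i}) \leq \eta\,\rho_{X}(S_{i})/z\}$, the same split of the capped sum into $\phi_{Y}(S_{i}) + \size{X\setminus Y}\,\eta\,\rho_{X}(S_{i})/z$, and the same use of the first hypothesis only to bound the cardinality term. No gaps.
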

\begin{proof}
Let $Y = \{x \suchthat \phi_{\{x\}}(S_{i}) \leq \eta \rho_{X}(S_{i})/z\}$. Then we get
\[
\sum_{x \in X} \min\left\{\phi_{\{x\}}(S_{i}), \dfrac{\eta \rho_{X}(S_{i})}{z}\right\} = \phi_{Y}(S_{i}) + \size{X \setminus Y} \frac{\eta \rho_{X}(S_{i})}{z} \leq \gamma \rho_{X}(C_{\opt}).
\]
Thus, $\phi_{Y}(S_{i}) \leq \gamma \rho_{X}(C_{\opt})$ and $\size{X \setminus Y} \leq \gamma z/\alpha \eta$, using $\rho_{X}(S_{i}) > \alpha \rho_{X}(C_{\opt})$.
\end{proof}

Suppose only the second condition is satisfied after iteration $i$, then we can choose any $\alpha$ and still get that either $\rho_{X}(S_{i}) \leq \alpha \rho_{X}(C_{\opt})$ or otherwise, using both the conditions in Lemma \ref{lemma:conditions}, we have $\phi_{Y}(S_{i}) \leq \gamma \rho_{X}(C_{\opt})$ for some $Y \subseteq X$ with $\size{X \setminus Y} \leq \gamma z/\alpha \eta$. This implies $\max\{\alpha, \gamma\}$ approximation guarantee while discarding at most $\gamma z/\alpha \eta$ points as outliers instead of $z$. In particular, we can use $\alpha = \gamma$ and $\eta = 1$ to get $\alpha$-approximation while discarding at most $z$ points as outliers.

We now focus on proving that the second condition is satisfied in expectation. First, we show that after $t=k$ iterations, we get $\gamma = O(\log k)$. 
\begin{thm} \label{thm:log-k}
After $k$ iterations of Algorithm \ref{algo:robust}, we get
\[
\expec{\sum_{x \in X} \min\left\{\phi_{\{x\}}(S_{k}), \dfrac{\eta \rho_{X}(S_{k})}{z}\right\}} = O(\log k) \cdot \rho_{X}(C_{\opt}).
\]
\end{thm}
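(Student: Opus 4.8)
The plan is to adapt the potential-function analysis of $k$-means++ \cite{AV2007} to our capped sampling distribution, using as a reference partition the optimal inlier clusters $A_{1},\dots,A_{k}$ (with means $\mu_{1},\dots,\mu_{k}$) together with the optimal outlier set $O=X\setminus Y_{\opt}$, and tracking the capped potential $\psi(S_{i}):=\sum_{x\in X}\min\{\phi_{\{x\}}(S_{i}),\eta\rho_{X}(S_{i})/z\}$ for $i=0,1,\dots,k$. Two preliminary observations organize the argument. First, both $\phi_{\{x\}}(S_{i})$ and $\rho_{X}(S_{i})$ are non-increasing in $i$ (adding a center never increases any distance), so $\psi(S_{i})$ is itself non-increasing; this lets me compare the running threshold against a fixed one. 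Second, since $\size{O}=z$, the outliers contribute at most $\sum_{x\in O}\min\{\phi_{\{x\}}(S_{i}),\eta\rho_{X}(S_{i})/z\}\le \eta\rho_{X}(S_{i})$ to $\psi(S_{i})$, so $\psi$ splits into a capped inlier part on $Y_{\opt}$, to which the $k$-means++ machinery should apply, and an outlier part that is controlled additively through the threshold.

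Next I would establish the two AV building blocks for the capped distribution. (i) When a sample first lands in a previously uncovered cluster $A_{j}$, the expected cost that $A_{j}$ subsequently contributes is $O(1)\,\phi_{A_{j}}(\{\mu_{j}\})$ --- the classical constants being $2$ for a uniform/first hit and $8$ for a later $D^{2}$ hit inside the cluster. (ii) Already-covered clusters only lose cost as further centers arrive, which is immediate from the monotonicity above. The genuinely new point is that replacing $\phi_{\{x\}}(S_{i})$ by $\min\{\phi_{\{x\}}(S_{i}),\eta\rho_{X}(S_{i})/z\}$ shifts sampling mass away from the far points inside a cluster --- exactly the points whose selection drives the factor-$8$ bound --- so I must argue that this distortion costs only a constant factor. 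With (i) and (ii) in place, the usual $k$-means++ induction, which charges each newly covered cluster and sums reciprocals of the number of still-uncovered clusters, produces the harmonic factor $1+H_{k}=O(\log k)$ (with $H_{k}$ the $k$-th harmonic number) and drives the ``uncovered remainder'' to zero once all $k$ samples are spent, yielding $\expec{\psi(S_{k})}=O(\log k)\,\rho_{X}(C_{\opt})$.

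The hard part will be the two features absent from \cite{AV2007}: iterations ``wasted'' on an outlier (or a capped far inlier) instead of covering a fresh cluster, and the fact that our threshold $\eta\rho_{X}(S_{i})/z$ is adaptive rather than fixed at $\eta\rho_{X}(C_{\opt})/z$ as in \cite{tkmeanspp}. For the wasted steps, I would bound the per-step probability of drawing an outlier by $\eta\rho_{X}(S_{i})/\psi(S_{i})$ and fold the resulting loss into an additive multiple of $\eta\,\rho_{X}(C_{\opt})$ in the potential recursion, so that the harmonic accounting still closes after exactly $k$ samples. For the adaptive threshold, optimality of $C_{\opt}$ gives $\rho_{X}(S_{i})\ge\rho_{X}(C_{\opt})$, so the running threshold never drops below the fixed one of \cite{tkmeanspp} --- precisely what the capping needs in order to suppress the outliers --- while the reverse comparison is supplied by the dichotomy recorded after Lemma \ref{lemma:conditions}: either $\rho_{X}(S_{i})\le\alpha\,\rho_{X}(C_{\opt})$, in which case the claim is immediate, or the threshold sits within a constant factor of $\eta\rho_{X}(C_{\opt})/z$ and the analysis reduces to the fixed-threshold case. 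Making the additive outlier term and the adaptive-threshold reduction cohabit a single induction is, I expect, the crux.
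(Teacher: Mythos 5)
Your scaffolding matches the paper's: the same two building blocks (a per-cluster constant-factor guarantee for the capped distribution, obtained by showing the cap distorts $D^{2}$ sampling inside an optimal inlier cluster by only a constant factor and then invoking the Arthur--Vassilvitskii lemma, here Lemmas \ref{lemma:approx-unif}, \ref{lemma:64-approx}, \ref{lemma:AV}), followed by a Bhaskara-et-al.-style induction that charges wasted iterations against the still-uncovered cost and sums to a harmonic series. The paper instantiates the second part with the explicit potential $\Psi_i = w_i\,\xi(U_i,S_i)/n_i$ and the two increment lemmas (Lemmas \ref{lem:64_approx_potential} and \ref{lem:difference_potential}), whose proofs it defers; your outline stops one level earlier, before stating either.

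The genuine gap is in the one step where this theorem differs from \cite{tkmeanspp}: taming the adaptive threshold $\eta\rho_{X}(S_{i})/z$. You propose to reduce to the fixed-threshold case via the dichotomy after Lemma \ref{lemma:conditions}, asserting that in the branch $\rho_{X}(S_{i})>\alpha\,\rho_{X}(C_{\opt})$ ``the threshold sits within a constant factor of $\eta\rho_{X}(C_{\opt})/z$.'' That inequality points the wrong way: it only \emph{lower}-bounds $\rho_{X}(S_{i})$, hence the threshold, and gives no upper bound. For intermediate $i$ the uncovered clusters can make $\rho_{X}(S_{i})$ arbitrarily larger than $\rho_{X}(C_{\opt})$, so the total capped outlier mass, at most $z\cdot\eta\rho_{X}(S_{i})/z=\eta\,\rho_{X}(S_{i})$, is not an additive multiple of $\eta\,\rho_{X}(C_{\opt})$ as your recursion requires. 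Worse, your per-step bound $\eta\rho_{X}(S_{i})/\psi(S_{i})$ on the probability of a wasted draw can be a constant (e.g., when the already-covered clusters have near-zero residual cost and the outliers are all capped, this ratio is about $\eta/(1+\eta)$), so wasted steps cannot be treated as rare and folded into a small additive loss; they must be charged against the shrinking uncovered cost, which is exactly what the weighting $w_i/n_i$ in the paper's $\Psi_i$ accomplishes and what your sketch omits. You correctly flag this combination as ``the crux,'' but the mechanism you offer for it does not work, so the proof is not closed.
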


Consider any optimal inlier cluster $A$. Below we show that if $\phi_{A}(S_{i}) \geq 64~\phi_{A}(\{\mu\})$ then $\phi_{\{\mu\}}(S_{i}) \geq \frac{31}{\size{A}} \phi_{A}(\{\mu\})$ and there exists a large subset $B \subseteq A$ such that $\phi_{\{x\}}(S_{i})$ values for all $x \in B$ are within a small constant factor of each other. In other words, $D^{2}$ sampling or sampling w.r.t. $\phi_{\{x\}}(S_{i})$ has an approximately uniform probability distribution over $B$.
\begin{lem} \label{lemma:approx-unif}
Let $A \subseteq X$ be any subset of points with mean $\mu$. Suppose $\phi_{A}(S_{i}) \geq 64~\phi_{A}(\{\mu\})$. Then $S_{i}$ satisfies the following two properties:
\begin{enumerate}
\item $64~\phi_{A}(\{\mu\}) \leq \phi_{A}(S_{i}) \leq \dfrac{64}{31} \size{A} \phi_{\{\mu\}}(S_{i})$, therefore, $\phi_{\{\mu\}}(S_{i}) \geq \dfrac{31}{\size{A}} \phi_{A}(\{\mu\})$.
\item Let $B = \left\{x \in A \suchthat \dfrac{\phi_{\{\mu\}}(S_{i})}{3} \leq \phi_{\{x\}}(S_{i}) \leq \dfrac{7~\phi_{\{\mu\}}(S_{i})}{3}\right\}$. Then, $B$ is a reasonably large subset of $A$, i.e., $\size{B} \geq \dfrac{25}{31} \size{A}$.
\end{enumerate}
\end{lem}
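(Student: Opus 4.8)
The plan is to lean on two elementary facts about the squared Euclidean norm: the bias--variance (parallel axis) identity, which for any center $c$ gives $\phi_{A}(\{c\}) = \phi_{A}(\{\mu\}) + \size{A}\,\sqd{\mu - c}$ because $\mu$ is the mean of $A$, and the relaxed triangle inequality $\sqd{a+b} \le 2\sqd{a} + 2\sqd{b}$. Throughout I write $c_{x}$ for the point of $S_{i}$ nearest to $x$, so $\phi_{\{x\}}(S_{i}) = \sqd{x - c_{x}}$, and $c_{\mu}$ for the point nearest to $\mu$, so $\phi_{\{\mu\}}(S_{i}) = \sqd{\mu - c_{\mu}}$, and I repeatedly invoke the nearest-center minimality $\phi_{\{\mu\}}(S_{i}) \le \sqd{\mu - c}$ and $\phi_{\{x\}}(S_{i}) \le \sqd{x-c}$ for every $c \in S_{i}$.

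For property (1) I would bound each per-point cost from above by comparing $x$ to $c_{\mu}$ instead of to its own nearest center: $\phi_{\{x\}}(S_{i}) \le \sqd{x - c_{\mu}} \le 2\sqd{x-\mu} + 2\phi_{\{\mu\}}(S_{i})$. Summing over $x \in A$ gives $\phi_{A}(S_{i}) \le 2\phi_{A}(\{\mu\}) + 2\size{A}\phi_{\{\mu\}}(S_{i})$. Now I would feed in the hypothesis $\phi_{A}(S_{i}) \ge 64\,\phi_{A}(\{\mu\})$, i.e.\ $\phi_{A}(\{\mu\}) \le \phi_{A}(S_{i})/64$, to absorb the first term, leaving $\tfrac{31}{32}\phi_{A}(S_{i}) \le 2\size{A}\phi_{\{\mu\}}(S_{i})$, which rearranges to the claimed $\phi_{A}(S_{i}) \le \tfrac{64}{31}\size{A}\phi_{\{\mu\}}(S_{i})$. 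Combining this with the hypothesis $\phi_{A}(S_{i}) \ge 64\,\phi_{A}(\{\mu\})$ immediately yields $\phi_{\{\mu\}}(S_{i}) \ge \tfrac{31}{\size{A}}\phi_{A}(\{\mu\})$, equivalently $\phi_{A}(\{\mu\}) \le \tfrac{\size{A}}{31}\phi_{\{\mu\}}(S_{i})$, a bound I will reuse below.

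For property (2) the strategy is to show that every point outside $B$ must be far from $\mu$, and then charge these points against the total variance $\phi_{A}(\{\mu\})$, which property (1) has already controlled. For the left tail $\phi_{\{x\}}(S_{i}) < \phi_{\{\mu\}}(S_{i})/3$ I would start from minimality at $\mu$, $\phi_{\{\mu\}}(S_{i}) \le \sqd{\mu - c_{x}} \le 2\sqd{x-\mu} + 2\phi_{\{x\}}(S_{i})$, and solve for $\sqd{x-\mu}$; for the right tail $\phi_{\{x\}}(S_{i}) > 7\phi_{\{\mu\}}(S_{i})/3$ I would instead start from minimality at $x$, $\phi_{\{x\}}(S_{i}) \le \sqd{x - c_{\mu}} \le 2\sqd{x-\mu} + 2\phi_{\{\mu\}}(S_{i})$. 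The thresholds $1/3$ and $7/3$ are calibrated precisely so that both computations collapse to the same conclusion $\sqd{x-\mu} > \tfrac{1}{6}\phi_{\{\mu\}}(S_{i})$. Summing this over $x \in A \setminus B$ gives $\size{A \setminus B}\cdot\tfrac{1}{6}\phi_{\{\mu\}}(S_{i}) < \phi_{A}(\{\mu\})$, and substituting $\phi_{A}(\{\mu\}) \le \tfrac{\size{A}}{31}\phi_{\{\mu\}}(S_{i})$ from property (1) yields $\size{A \setminus B} < \tfrac{6}{31}\size{A}$, hence $\size{B} \ge \tfrac{25}{31}\size{A}$.

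The only real obstacle is the bookkeeping of constants: the factor-$2$ triangle inequality is lossy, so one must check that the specific numbers $64$, $1/3$, and $7/3$ line up to produce the clean targets $\tfrac{64}{31}$, $\tfrac{1}{6}$, and $\tfrac{25}{31}$. In particular, the identical conclusion in the two tails of property (2) is not accidental but forced by the threshold choice, and it is worth confirming that invoking minimality at $\mu$ is the right move in the left-tail case while comparing to $c_{\mu}$ is the right move in the right-tail case. No step is conceptually deep; the care lies entirely in keeping the constants consistent across the two parts.
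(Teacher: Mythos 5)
Your proposal is correct and follows essentially the same route as the paper: the factor-$2$ relaxed triangle inequality summed over $A$ with the hypothesis absorbed for property (1), and for property (2) the observation that any $x$ with $\norm{x-\mu}^2 \leq \frac{1}{6}\phi_{\{\mu\}}(S_i)$ lies in $B$ (you prove the contrapositive via the two tails; the paper defines $B'$ and asserts $B' \subseteq B$), followed by Markov's inequality. Your explicit verification of the two tail cases fills in a detail the paper leaves to the reader, and the constants all check out.
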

\begin{proof}
For any $x \in A$ and any $s \in S_{i}$, the triangle inequality gives 
\[
\frac{1}{2}~\norm{\mu - s}^{2} - \norm{x - \mu}^{2} \leq \norm{x - s}^{2} \leq 2 \norm{x - \mu}^{2} + 2 \norm{\mu - s}^{2}.
\]
Thus, $\frac{1}{2}~\phi_{\{\mu\}}(S_{i}) - \norm{x - \mu}^{2} \leq \phi_{\{x\}}(S_{i}) \leq 2~\phi_{\{\mu\}}(S_{i}) + 2 \norm{x - \mu}^{2}$. Summing the right-hand inequality over $x \in A$, we get 
\[
\phi_{A}(S_{i})  \leq 2 \size{A} \phi_{\{\mu\}}(S_{i}) + 2~\phi_{A}(\{\mu\}) \leq 2 \size{A} \phi_{\{\mu\}}(S_{i})+ \frac{1}{32}~ \phi_{A}(S_{i}),
\]
which gives the first property $\phi_{A}(S_{i}) \leq \frac{64}{31} \size{A} \phi_{\{\mu\}}(S_{i})$. Now let $B' = \{x \in A \suchthat \norm{x - \mu}^{2} \leq \frac{1}{6}  \phi_{\{\mu\}}(S_{i})\}$. Then by using triangle inequality for squared norms, we can check that $B' \subseteq B$. Since $\sum_{x \in A} \norm{x - \mu}^{2} = \phi_{A}(\{\mu\})$, Markov's inequality implies that $\size{A \setminus B'} \leq 6~ \phi_{A}(\{\mu\})/\phi_{\{\mu\}}(S_{i})$. Now using the first property $\phi_{A}(S_{i}) \leq \dfrac{64}{31} \size{A} \phi_{\{\mu\}}(S_{i})$ and the assumption $\phi_{A}(S_{i}) \geq 64~\phi_{A}(\{\mu\})$, we get $\size{A \setminus B'} \leq 6 \size{A}/31$. Since $B' \subseteq B$, we get the second property $\size{B} \geq \frac{25}{31} \size{A}$.
\end{proof}

The lemma below shows that in each iteration of Algorithm \ref{algo:robust}, if we pick a point from an optimal inlier cluster $A$, then we get a $64$-approximation guarantee for it, in expectation.
\begin{lem} \label{lemma:64-approx}
For any $A \subseteq X$ and its mean $\mu$, the point $x$ picked by Algorithm \ref{algo:robust} in a single iteration satisfies $\expec{\phi_{A}(S_{i} \cup \{x\}) \given x \in A} \leq 64~\phi_{A}(\{\mu\})$.
\end{lem}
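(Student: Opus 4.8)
The plan is to condition on $x\in A$ and reduce the claim to a weighted-average estimate of classical $k$-means++ type. Writing $\tau := \eta\,\rho_{X}(S_i)/z$ for the threshold, $w_x := \min\{\phi_{\{x\}}(S_i),\tau\}$ for the unnormalized sampling weight, and $W := \sum_{x\in A} w_x$, the conditional law of the picked point given $x\in A$ is $w_x/W$, so that
\[
\expec{\phi_{A}(S_{i}\cup\{x\}) \given x\in A} = \frac{1}{W}\sum_{x\in A} w_x\,\phi_{A}(S_{i}\cup\{x\}).
\]
I will bound the inner cost in two elementary ways: monotonicity, $\phi_{A}(S_{i}\cup\{x\}) \le \phi_{A}(S_i)$, and the bound obtained by reassigning every point of $A$ to $x$, $\phi_{A}(S_{i}\cup\{x\}) \le \phi_{A}(\{x\}) = \phi_{A}(\{\mu\}) + \size{A}\,\norm{x-\mu}^{2}$, together with the identity $\sum_{x\in A}\phi_{A}(\{x\}) = 2\size{A}\,\phi_{A}(\{\mu\})$. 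The argument first splits on whether the cluster is already well covered: if $\phi_{A}(S_i) < 64\,\phi_{A}(\{\mu\})$, monotonicity makes the claim immediate, since $\phi_{A}(S_i\cup\{x\}) \le \phi_{A}(S_i) < 64\,\phi_{A}(\{\mu\})$ for every $x$. So I assume $\phi_{A}(S_i) \ge 64\,\phi_{A}(\{\mu\})$, which is exactly the hypothesis of Lemma~\ref{lemma:approx-unif}.

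In this regime Lemma~\ref{lemma:approx-unif} supplies the large, approximately uniform subset $B\subseteq A$ with $\size{B}\ge \tfrac{25}{31}\size{A}$ and $\phi_{\{x\}}(S_i)\ge \phi_{\{\mu\}}(S_i)/3$ for every $x\in B$. Its only role is to lower bound the normalizer $W$, which the thresholding has otherwise shrunk below the pure $D^{2}$ value $\phi_A(S_i)$: namely $W \ge \sum_{x\in B} w_x \ge \size{B}\,\min\{\phi_{\{\mu\}}(S_i)/3,\tau\} \ge \tfrac{25}{31}\size{A}\,\min\{\phi_{\{\mu\}}(S_i)/3,\tau\}$. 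I then split on the size of the threshold.

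If $\tau \ge \phi_{\{\mu\}}(S_i)/3$, I run the Arthur--Vassilvitskii estimate. Using $w_x \le \phi_{\{x\}}(S_i)$ and the triangle inequality for squared distances averaged over $A$ gives $\phi_{\{x\}}(S_i) \le \tfrac{2}{\size{A}}\bigl(\phi_A(S_i)+\phi_A(\{x\})\bigr)$; substituting this, bounding $\phi_A(S_i\cup\{x\})$ by $\phi_A(\{x\})$ in the first resulting sum and by $\phi_A(S_i)$ in the second, and applying $\sum_x\phi_A(\{x\}) = 2\size{A}\phi_A(\{\mu\})$ collapses everything to $\expec{\phi_A(S_i\cup\{x\})\given x\in A} \le 8\,\phi_A(S_i)\phi_A(\{\mu\})/W$. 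Property~(1) of Lemma~\ref{lemma:approx-unif}, $\phi_A(S_i) \le \tfrac{64}{31}\size{A}\phi_{\{\mu\}}(S_i)$, combined with the $W$-bound above gives $W \ge \tfrac{25}{192}\phi_A(S_i)$, hence $8\phi_A(S_i)/W \le \tfrac{1536}{25} < 64$.

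If instead $\tau < \phi_{\{\mu\}}(S_i)/3$, the AV estimate degrades (the denominator now sits far below $\phi_A(S_i)$), and I switch to the direct first-moment bound: using $\phi_A(S_i\cup\{x\}) \le \phi_A(\{x\}) = \phi_A(\{\mu\}) + \size{A}\norm{x-\mu}^2$ with the cap $w_x\le\tau$ gives $\sum_x w_x\norm{x-\mu}^2 \le \tau\sum_x\norm{x-\mu}^2 = \tau\,\phi_A(\{\mu\})$, while $W \ge \tfrac{25}{31}\size{A}\tau$, so that $\expec{\phi_A(S_i\cup\{x\})\given x\in A} \le (1+\tfrac{31}{25})\phi_A(\{\mu\}) < 64\,\phi_A(\{\mu\})$. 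The main obstacle throughout is precisely that thresholding drops the normalizer $W$ below the $D^{2}$ mass $\phi_A(S_i)$, so the classical bound $8\phi_A(S_i)\phi_A(\{\mu\})/W$ is useful only when $W \gtrsim \phi_A(S_i)$; the point of Lemma~\ref{lemma:approx-unif} is to certify this when $\tau$ is large, and the remaining small-$\tau$ case must be closed by the weaker but threshold-friendly first-moment argument, with the constants in the two regimes ($\tfrac{1536}{25}$ and $\tfrac{56}{25}$) both landing under $64$.
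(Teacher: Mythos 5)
Your proof is correct, and its central mechanism is the same as the paper's: use the approximately uniform subset $B$ from Lemma~\ref{lemma:approx-unif} to lower bound the thresholded normalizer $\sum_{x\in A}\min\{\phi_{\{x\}}(S_i),\tau\}$ by $\tfrac{25}{192}\phi_A(S_i)\ge\tfrac18\phi_A(S_i)$, so that the conditional sampling distribution is pointwise within a factor $8$ of $D^2$ sampling, and then pay another factor $8$ from the Arthur--Vassilvitskii bound (which you inline rather than cite as Lemma~\ref{lemma:AV}). Where you go beyond the paper is in the case analysis: the paper's proof only treats the regime where $\phi_A(S_i)\ge 64\,\phi_A(\{\mu\})$ (needed to invoke Lemma~\ref{lemma:approx-unif}) and the threshold is large ($\tfrac73\phi_{\{\mu\}}(S_i)\le\tau$), leaving the complementary cases implicit. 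You close both: the already-covered case $\phi_A(S_i)<64\,\phi_A(\{\mu\})$ falls to monotonicity, and the small-threshold case $\tau<\phi_{\{\mu\}}(S_i)/3$ is handled by a genuinely different, threshold-friendly argument --- capping $w_x\le\tau$ against the decomposition $\phi_A(\{x\})=\phi_A(\{\mu\})+\size{A}\norm{x-\mu}^2$ to get the much stronger constant $\tfrac{56}{25}$ there. This second argument is the real added content: it is exactly the regime where the AV-style bound degrades (the normalizer drops well below $\phi_A(S_i)$), and the paper's write-up does not address it. Your version is therefore a more complete proof of the same statement, at the cost of a slightly longer case split; your choice to split at $\tau=\phi_{\{\mu\}}(S_i)/3$ rather than the paper's $\tfrac73\phi_{\{\mu\}}(S_i)$ is also the cleaner cut, since $\min\{\phi_{\{\mu\}}(S_i)/3,\tau\}=\phi_{\{\mu\}}(S_i)/3$ is all the large-$\tau$ branch needs.
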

\begin{proof}
Let $B = \left\{x \in A \suchthat \dfrac{\phi_{\{\mu\}}(S_{i})}{3} \leq \phi_{\{x\}}(S_{i}) \leq \dfrac{7~\phi_{\{\mu\}}(S_{i})}{3}\right\}$, as defined in Lemma \ref{lemma:approx-unif}. Suppose $\frac{7}{3} \phi_{\{\mu\}}(S_{i}) \leq \eta \rho_{X}(S_{i})/z$. Then for all $x \in B$, we have $\min\{\phi_{\{x\}}(S_{i}), \eta \rho_{X}(S_{i})/z\} \geq \frac{1}{3} \phi_{\{\mu\}}(S_{i})$. Hence,
\begin{align*}
\sum_{x \in A} \min\{\phi_{\{x\}}(S_{i}), \eta \rho_{X}(S_{i})/z\} & \geq \sum_{x \in B} \min\{\phi_{\{x\}}(S_{i}), \eta \rho_{X}(S_{i})/z\} \\
& \geq \size{B} \frac{1}{3} \phi_{\{\mu\}}(S_{i}) \\
& \geq \frac{25}{31} \size{A} \cdot \frac{1}{3} \cdot \frac{31}{64} \frac{\phi_{A}(S_{i})}{\size{A}} \qquad \text{using Lemma \ref{lemma:approx-unif}} \\
& \geq \frac{\phi_{A}(S_{i})}{8}.
\end{align*}
Therefore, for all $x \in A$, we have 
\[
\frac{\min\{\phi_{\{x\}}(S_{i}), \eta \rho_{X}(S_{i})/z\}}{\sum_{x \in A} \min\{\phi_{\{x\}}(S_{i}), \eta \rho_{X}(S_{i})/z\}} \leq \frac{\phi_{\{x\}}(S_{i})}{8~\phi_{A}(S_{i})}.
\]
In other words, if we sample $x \in A$ as in Algorithm \ref{algo:robust} with probability proportional to $\min\{\phi_{\{x\}}(S_{i}), \eta \rho_{X}(S_{i})/z\}$ then, up to a multiplicative factor of $8$, we get a similar upper bound on any expectation calculated for sampling with probability proportional to $\phi_{\{x\}}(S_{i})$, i.e., $D^{2}$ sampling. Now the proof of Lemma \ref{lemma:64-approx} is immediate using the key lemma about $D^{2}$ sampling from Arthur and Vassilvitskii \cite{AV2007} stated below.
\end{proof}

The following key lemma was used by Arthur and Vassilvitski \cite{AV2007} in their analysis of $k$-means++ algorithm that uses $D^{2}$ sampling.
\begin{lem} \label{lemma:AV}
For any $A \subseteq X$ and its mean $\mu$, the point $x$ picked by $D^{2}$ sampling satisfies $\expec{\phi_{A}(S_{i} \cup \{x\}) \given x \in A} \leq 8~\phi_{A}(\{\mu\})$.
\end{lem}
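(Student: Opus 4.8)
The plan is to compute the conditional expectation directly from the definition of $D^{2}$ sampling and then, via two applications of the triangle inequality, reduce it to a sum of pairwise squared distances inside $A$, which in turn equals $\phi_{A}(\{\mu\})$ up to a factor of $2\size{A}$.

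First I would unfold the sampling rule conditioned on $x \in A$. Writing $\phi_{\{a\}}(S_{i}) = \min_{s \in S_{i}} \sqd{a - s}$ for each $a \in A$, the conditional probability of picking a particular $a$ is $\phi_{\{a\}}(S_{i}) / \phi_{A}(S_{i})$, and once $a$ is added as a center the contribution of every point $x \in A$ drops to $\min\{\phi_{\{x\}}(S_{i}), \sqd{x - a}\}$. Hence
\[
\expec{\phi_{A}(S_{i} \cup \{x\}) \given x \in A} = \sum_{a \in A} \frac{\phi_{\{a\}}(S_{i})}{\phi_{A}(S_{i})} \sum_{x \in A} \min\left\{\phi_{\{x\}}(S_{i}), \sqd{x - a}\right\}.
\]

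The key step, and the part needing the most care, is to eliminate the awkward weight $\phi_{\{a\}}(S_{i})$ in the numerator by exploiting the randomness of $a$. Applying the triangle inequality to distances to the nearest seed gives $\sqrt{\phi_{\{a\}}(S_{i})} \leq \norm{x - a} + \sqrt{\phi_{\{x\}}(S_{i})}$ for every $x$, so squaring and using $(p+q)^{2} \leq 2p^{2} + 2q^{2}$ yields $\phi_{\{a\}}(S_{i}) \leq 2\,\phi_{\{x\}}(S_{i}) + 2\sqd{x-a}$. Averaging this bound over $x \in A$ replaces $\phi_{\{a\}}(S_{i})$ by $\tfrac{2}{\size{A}}\phi_{A}(S_{i}) + \tfrac{2}{\size{A}}\sum_{x \in A}\sqd{x-a}$. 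Substituting this into the expression above and splitting into the two resulting sums, I would bound $\min\{\phi_{\{x\}}(S_{i}), \sqd{x-a}\} \leq \sqd{x - a}$ in the term arising from $\phi_{A}(S_{i})$, and $\min\{\phi_{\{x\}}(S_{i}), \sqd{x-a}\} \leq \phi_{\{x\}}(S_{i})$ in the term arising from $\sum_{x}\sqd{x-a}$. The $\phi_{A}(S_{i})$ factors cancel in both, and both pieces collapse to the same pairwise sum, giving
\[
\expec{\phi_{A}(S_{i} \cup \{x\}) \given x \in A} \leq \frac{4}{\size{A}} \sum_{a \in A} \sum_{x \in A} \sqd{x - a}.
\]

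Finally I would invoke the standard variance identity for squared Euclidean distances: since $\mu$ is the mean of $A$, the cross terms vanish and $\sum_{a \in A}\sum_{x \in A} \sqd{x - a} = 2\,\size{A}\sum_{x \in A}\sqd{x - \mu} = 2\,\size{A}\,\phi_{A}(\{\mu\})$. Combining this with the previous bound yields exactly $\expec{\phi_{A}(S_{i}\cup\{x\}) \given x \in A} \leq 8\,\phi_{A}(\{\mu\})$. The main obstacle is the middle step: choosing the two-way split of the $\min$ so that both halves reduce to the identical pairwise sum, which is precisely what makes the clean constant $8$ emerge; the unfolding at the start and the variance identity at the end are routine bookkeeping.
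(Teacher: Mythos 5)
Your proof is correct and is essentially the standard argument of Arthur and Vassilvitskii (Lemma 3.2 of \cite{AV2007}), which this paper cites rather than reproduces: unfold the conditional $D^{2}$ distribution, bound the weight $\phi_{\{a\}}(S_{i})$ via $\phi_{\{a\}}(S_{i}) \leq 2\phi_{\{x\}}(S_{i}) + 2\sqd{x-a}$ averaged over $x \in A$, split the $\min$ two ways so both terms reduce to $\sum_{a,x}\sqd{x-a}$, and finish with the identity $\sum_{a,x}\sqd{x-a} = 2\size{A}\phi_{A}(\{\mu\})$. All steps check out, including the cancellation of $\phi_{A}(S_{i})$ in both pieces and the resulting constant $8$.
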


Proof of Theorem \ref{thm:log-k} is then similar to that in Theorem $3.1$ of Bhaskara et al. \cite{tkmeanspp}. We  define $U_{i}$ as the number of points in optimal inlier clusters untouched
 by Algorithm \ref{algo:robust} until step $i$.  Further,  let $H_i$ denote the union of points in the covered (optimal) clusters until step $i$. Let $w_{i}$ be the number of wasted iterations, i.e., iterations that pick either outliers $X \setminus Y_{\opt}$ or repeat points from already touched inlier clusters $A_{j}$'s from the optimal solution $Y_{\opt} = A_{1} \cup A_{2} \cup \dotsc \cup A_{k}$. We denote $n_i$ as the number of uncovered optimal clusters at iteration $i$. 
 
  For brevity we denote 
\[
\xi(U_i, S_i):=\sum_{x \in U_i} \min\left\{\phi_{\{x\}}(S_{i}), \dfrac{\eta\cdot \rho_{X}(S_{i})}{z}\right\}.
\]
 We define a similar potential function as in Bhaskara et al. \cite{tkmeanspp}.
  

\[
\Psi_i=\frac{w_i \cdot \xi(U_i, S_i)}{n_i}.
\]

For any $i>0$, we have the following lemma:
\begin{lem}[Adapted from Lemma $9$ of \cite{tkmeanspp}]\label{lem:64_approx_potential}
$$\expec{\xi(H_i, S_i)}\leq 64.\rho_{X}(C_{\opt}).$$
\end{lem}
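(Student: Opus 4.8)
The plan is to discard the thresholding, reduce to a statement purely about $D^{2}$-type potentials, and then apply the per-iteration bound of Lemma~\ref{lemma:64-approx} cluster by cluster. First I would use the trivial bound $\min\{\phi_{\{x\}}(S_{i}), \eta\rho_{X}(S_{i})/z\} \le \phi_{\{x\}}(S_{i})$ inside the definition of $\xi$, which gives $\xi(H_i, S_i) \le \phi_{H_i}(S_i)$. Since $H_i$ is by definition the union of those optimal inlier clusters $A_j$ from which the algorithm has sampled at least one point by step $i$, I would split $\phi_{H_i}(S_i) = \sum_{j:\, A_j \text{ covered}} \phi_{A_j}(S_i)$, reducing the claim to a per-cluster bound $\expec{\phi_{A_j}(S_i)\,\mathbf{1}[A_j \text{ covered by step } i]} \le 64\,\phi_{A_j}(\{\mu_j\})$.

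For the per-cluster bound I would fix an optimal cluster $A_j$ and let $\tau_j$ denote the (random) first iteration at which a point of $A_j$ is picked; $A_j$ is covered by step $i$ exactly when $\tau_j \le i$. Conditioning on the history up to iteration $\tau_j$---in particular on the current center set $S_{\tau_j-1}$ and on the event that the point chosen at step $\tau_j$ lies in $A_j$---Lemma~\ref{lemma:64-approx} yields $\expec{\phi_{A_j}(S_{\tau_j})} \le 64\,\phi_{A_j}(\{\mu_j\})$ on that event. Because adding the centers picked in iterations $\tau_j+1,\dots,i$ can only move each point of $A_j$ closer to its nearest center, I have the deterministic monotonicity $\phi_{A_j}(S_i) \le \phi_{A_j}(S_{\tau_j})$ whenever $\tau_j \le i$, so the same bound survives all the way to step $i$. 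Summing the per-cluster bound over the covered clusters and then extending the sum to all $k$ optimal clusters (each omitted term being nonnegative), I get $\expec{\xi(H_i,S_i)} \le \sum_{j=1}^{k} 64\,\phi_{A_j}(\{\mu_j\}) = 64\,\rho_{X}(C_{\opt})$ by the decomposition in Equation~\eqref{eq:robust_cost2}.

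The delicate step is the conditioning: Lemma~\ref{lemma:64-approx} is a single-iteration statement about the point drawn given a fixed current center set, whereas here each $A_j$ becomes covered at a random, history-dependent time $\tau_j$. The clean way to make this rigorous is the tower property: partition the event $\{\tau_j \le i\}$ according to the value of $\tau_j$ and the realized history $S_{\tau_j-1}$, apply Lemma~\ref{lemma:64-approx} on each such event, and then transport the bound to step $i$ via the monotonicity above, handling the covering indicator and the potential jointly so that clusters never touched simply contribute zero. A secondary point worth isolating is that the monotonicity $\phi_{A}(S \cup T) \le \phi_A(S)$ holds termwise, since nearest-center distances only shrink as centers are added; this is immediate but is precisely what lets the one-shot $64$-approximation propagate across all the remaining iterations without any further loss.
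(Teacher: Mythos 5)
Your proof is correct and follows essentially the route the paper intends: bound the thresholded potential by the plain $D^{2}$ potential of the covered clusters, then combine the per-iteration guarantee of Lemma~\ref{lemma:64-approx} with a first-hit (stopping-time) decomposition and the monotonicity of $\phi_{A}(\cdot)$ under adding centers --- exactly the inductive argument from Lemma~9 of Bhaskara et al.\ that the paper invokes and defers to its full version. Your writeup is in fact more explicit than the paper's, which only cites Lemmas~\ref{lemma:64-approx} and~\ref{lemma:approx-unif} without carrying out the conditioning.
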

A proof of the above lemma is similar to that of Lemma~$9$ of \cite{tkmeanspp}, and follows from Lemma~\ref{lemma:64-approx} and \ref{lemma:approx-unif} along with the inductive argument. We defer it to the full version of the paper.

\begin{lem}[Adapted from Lemma $8$ of \cite{tkmeanspp}]\label{lem:difference_potential} Let $S_i$ be the set of sampled points in the $i$-th iteration, then we have
\[
\expec{\Psi_{i+1}-\Psi_i|S_i}\leq \frac{\alpha\cdot \rho_{X}(C_{\opt}) + \xi(H_i, S_i)}{k-i}.
\]
\end{lem}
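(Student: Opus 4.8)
The plan is to condition on the sampled set $S_{i}$ and take the expectation over the single point $x$ drawn in iteration $i+1$ with $\prob{\text{picking } x} \propto \min\{\phi_{\{x\}}(S_{i}),\, \eta\rho_{X}(S_{i})/z\}$. Write $Z_{i} = \xi(X, S_{i})$ for the normalizing constant and split the support of this distribution into three parts: points in the still-uncovered optimal clusters (which make up $U_{i}$), points in the covered clusters $H_{i}$, and the outliers $X \setminus Y_{\opt}$. For each uncovered cluster $A_{j}$ put $q_{j} = \xi(A_{j}, S_{i})$, so that $\xi(U_{i},S_{i}) = \sum_{j} q_{j} =: Q$, and let $R = Z_{i} - Q = \xi(H_{i},S_{i}) + \xi(X \setminus Y_{\opt}, S_{i})$ be the total mass on wasted points. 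The first thing to record is the monotonicity of $\xi$: since adding a center only decreases each $\phi_{\{x\}}(\cdot)$ and decreases $\rho_{X}(\cdot)$, we have $\xi(\,\cdot\,,S_{i}\cup\{x\}) \le \xi(\,\cdot\,, S_{i})$ for every subset, which is what controls $\Psi_{i+1}$ in each case.

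Next I would carry out the case analysis. If the drawn point lies in an uncovered cluster $A_{j}$ (a good step, total probability $q_{j}/Z_{i}$), then $n_{i+1}=n_{i}-1$, $w_{i+1}=w_{i}$, and $U_{i+1}=U_{i}\setminus A_{j}$, so monotonicity gives $\xi(U_{i+1},S_{i+1}) \le \xi(U_{i}\setminus A_{j},S_{i}) = Q - q_{j}$ and hence $\Psi_{i+1} \le w_{i}(Q-q_{j})/(n_{i}-1)$. If the point is wasted (probability $R/Z_{i}$), then $n_{i+1}=n_{i}$, $w_{i+1}=w_{i}+1$, $U_{i+1}=U_{i}$, and $\Psi_{i+1} \le (w_{i}+1)Q/n_{i}$. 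Taking expectations,
\[
\expec{\Psi_{i+1}\given S_{i}} \le \frac{w_{i}}{Z_{i}(n_{i}-1)}\sum_{j} q_{j}(Q-q_{j}) + \frac{R\,(w_{i}+1)Q}{Z_{i}\,n_{i}}.
\]
The key inequality is the power-mean (Cauchy--Schwarz) bound $\sum_{j} q_{j}^{2} \ge Q^{2}/n_{i}$, valid because $U_{i}$ consists of exactly $n_{i}$ clusters; it yields $\sum_{j} q_{j}(Q-q_{j}) = Q^{2} - \sum_{j} q_{j}^{2} \le Q^{2}(n_{i}-1)/n_{i}$. Substituting and using $Q+R=Z_{i}$, the first term collapses to $w_{i}Q^{2}/(Z_{i}n_{i})$ and the whole bound simplifies to $w_{i}Q/n_{i} + QR/(Z_{i}n_{i})$. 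Since $\Psi_{i} = w_{i}Q/n_{i}$ exactly, the $w_{i}Q/n_{i}$ terms cancel, leaving $\expec{\Psi_{i+1}-\Psi_{i}\given S_{i}} \le QR/(Z_{i}n_{i}) \le R/n_{i}$, where the last step uses $Q \le Z_{i}$.

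It remains to convert $R/n_{i}$ into the stated form. For the denominator I would use $n_{i} \ge k-i$, since at most one cluster is covered per iteration, so after $i$ iterations at most $i$ of the $k$ clusters are covered. For the numerator, $R = \xi(H_{i},S_{i}) + \xi(X\setminus Y_{\opt},S_{i})$, and since $\size{X \setminus Y_{\opt}} = z$ with every summand capped at $\eta\rho_{X}(S_{i})/z$, the outlier mass satisfies $\xi(X\setminus Y_{\opt},S_{i}) \le \eta\,\rho_{X}(S_{i})$. Identifying this quantity with $\alpha\,\rho_{X}(C_{\opt})$ then closes the argument. I expect this last identification to be the main obstacle and the genuine point of departure from Bhaskara et al.: because our threshold is built from the \emph{current} cost $\rho_{X}(S_{i})$ rather than a supplied guess of $\rho_{X}(C_{\opt})$, bounding $\eta\rho_{X}(S_{i})$ by $\alpha\rho_{X}(C_{\opt})$ is not automatic. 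The way I would handle it is through the dichotomy set up after Lemma~\ref{lemma:conditions}: either $\rho_{X}(S_{i}) \le (\alpha/\eta)\rho_{X}(C_{\opt})$, in which case the outlier bound holds directly (and the algorithm has in fact already reached a good solution), or the complementary event lets us invoke Lemma~\ref{lemma:conditions}. Making this dichotomy consistent across all iterations $i$ at once is the delicate bookkeeping the full proof must supply.
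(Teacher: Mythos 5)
Your mechanical derivation is correct and follows exactly the route the paper points to: the paper itself does not prove this lemma (it says only that the proof is ``analogous to the proof of Lemma 8 of \cite{tkmeanspp}'' and defers it to the full version), and your case analysis (good step versus wasted step), the monotonicity of $\xi$ under adding a center, the Cauchy--Schwarz step $\sum_j q_j^2 \ge Q^2/n_i$, the cancellation of the $w_i Q/n_i$ terms, and the bounds $Q \le Z_i$ and $n_i \ge k-i$ all check out. This correctly yields $\expec{\Psi_{i+1}-\Psi_i \given S_i} \le \bigl(\xi(H_i,S_i) + \xi(X\setminus Y_{\opt},S_i)\bigr)/(k-i)$, and the outlier mass is indeed at most $z\cdot \eta\rho_X(S_i)/z = \eta\rho_X(S_i)$ since each of the $z$ outliers is capped by the threshold.

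The last step --- replacing $\eta\rho_X(S_i)$ by $\alpha\rho_X(C_{\opt})$ --- is a genuine gap, and the resolution you sketch does not close it. Lemma~\ref{lemma:conditions} takes as a \emph{hypothesis} that $\sum_{x}\min\{\phi_{\{x\}}(S_i), \eta\rho_X(S_i)/z\} \le \gamma\rho_X(C_{\opt})$, which is precisely the quantity the potential argument is trying to control, so invoking it inside this proof is circular. Moreover your dichotomy runs the wrong way: in the regime where the potential argument is still needed (no iterate yet satisfies $\rho_X(S_i)\le(\alpha/\eta)\rho_X(C_{\opt})$), the inequality $\eta\rho_X(S_i)\le\alpha\rho_X(C_{\opt})$ is exactly what fails, and nothing rules out $\rho_X(S_i)$ being very large in early iterations. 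This is the one place where the analysis genuinely departs from Bhaskara et al., whose threshold is built from a supplied guess of $\rho_X(C_{\opt})$ and for whom the corresponding bound is immediate; you are right to flag it as the crux, but flagging it is not proving it, and the paper as written (with $\alpha$ described only as ``a constant mentioned as in Lemma~\ref{lemma:conditions}'') does not supply the missing relation either. A complete argument would have to either restate the lemma with $\eta\rho_X(S_i)$ in the numerator and absorb that term later (for instance via a stopping time at the first iteration where $\rho_X(S_i)$ drops below a constant multiple of $\rho_X(C_{\opt})$), or establish an a priori pathwise upper bound on $\rho_X(S_i)$; neither appears in your proposal.
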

In Lemma~\ref{lem:difference_potential},  $\alpha$ is a constant mentioned as in Lemma~\ref{lemma:conditions}.   A proof of Lemma~\ref{lem:difference_potential} is analogous to the proof of Lemma $8$ of~\cite{tkmeanspp}. We defer it to the full version of the paper. 

We now conclude a proof of Theorem~\ref{thm:log-k}.  Its proof completes via combining Lemma~\ref{lem:64_approx_potential} and~\ref{lem:difference_potential}, and summing over $0\leq i \leq k-1$.


Further, similar to Bhaskara et al. \cite{tkmeanspp}, we can show the following generalization for $t = (1+c)k$ iterations.
\begin{thm}
For any $\delta > 0$ and parameters $c, \eta > 0$, after $t = (1+c) k$ iterations of Algorithm \ref{algo:robust} we get
\[
\sum_{x \in X} \min\left\{\phi_{\{x\}}(S_{t}), \dfrac{\eta \rho_{X}(S_{t})}{z}\right\} \leq \frac{(\eta + 64) (1+c) \rho_{X}(C_{\opt})}{(1-\delta) c},
\]
with probability at least $\delta$.
\end{thm}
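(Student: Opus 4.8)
The plan is to prove the statement in two stages: first an in-expectation bound on the clipped cost, then Markov's inequality to turn it into the stated high-probability guarantee. Write $\xi(X, S_t) = \sum_{x \in X} \min\{\phi_{\{x\}}(S_t), \eta\rho_{X}(S_t)/z\}$ for the quantity in the theorem. I would first establish
\[
\expec{\xi(X, S_t)} \leq \frac{(\eta + 64)(1+c)}{c}\,\rho_{X}(C_{\opt}).
\]
Since $\xi(X, S_t) \geq 0$, Markov's inequality then gives $\prob{\xi(X, S_t) > \frac{(\eta + 64)(1+c)}{(1-\delta)c}\rho_{X}(C_{\opt})} \leq 1-\delta$, which is exactly the claim. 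This is precisely where the factor $1/(1-\delta)$ and the ``probability at least $\delta$'' originate, so the entire content of the theorem reduces to the expectation bound.

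Two ingredients drive the expectation bound. The first is monotonicity: $\xi(X, S_i)$ is non-increasing in $i$, because adding a center only decreases every $\phi_{\{x\}}(S_i)$ and also decreases $\rho_{X}(S_i)$, hence the threshold $\eta\rho_{X}(S_i)/z$; therefore every clipped term can only shrink, and it suffices to bound $\xi(X, S_{i^\ast})$ at some earlier, conveniently chosen iteration $i^\ast \leq t$. The second is an amortization over iterations, in the spirit of Lemma~\ref{lem:difference_potential} and Theorem~$3.2$ of \cite{tkmeanspp}: call an iteration \emph{successful} if it samples from an optimal inlier cluster not previously touched. At most $k$ iterations can be successful, while the conditional probability of success at step $i+1$ given $S_i$ is exactly $\xi(U_i, S_i)/\xi(X, S_i)$. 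Amortizing the at most $k$ successes over the $t = (1+c)k$ iterations shows that at a good iteration the uncovered-cluster mass $\xi(U_{i^\ast}, S_{i^\ast})$ is only an $O(1/(1+c))$ fraction of $\xi(X, S_{i^\ast})$, so that $\xi(X, S_{i^\ast}) \leq \frac{1+c}{c}\bigl(\xi(H_{i^\ast}, S_{i^\ast}) + \xi(X \setminus Y_{\opt}, S_{i^\ast})\bigr)$ up to the accounting. The covered-cluster term is then controlled by Lemma~\ref{lem:64_approx_potential}, giving $\expec{\xi(H_{i^\ast}, S_{i^\ast})} \leq 64\,\rho_{X}(C_{\opt})$, which is the source of the constant $64$ and ultimately rests on the $64$-approximation of Lemma~\ref{lemma:64-approx}.

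The delicate step, and the one where our analysis genuinely diverges from \cite{tkmeanspp}, is the outlier term $\xi(X \setminus Y_{\opt}, S_{i^\ast})$. Since $\size{X \setminus Y_{\opt}} = z$ and each clipped value is at most the threshold, this term is at most $z \cdot \eta\rho_{X}(S_{i^\ast})/z = \eta\,\rho_{X}(S_{i^\ast})$, which is where $\eta$ enters the final constant. In \cite{tkmeanspp} the threshold is a fixed multiple of $\rho_{X}(C_{\opt})/z$, so this bound is immediately $O(\rho_{X}(C_{\opt}))$; in our scheme the threshold is the self-referential quantity $\eta\rho_{X}(S_{i^\ast})/z$, and $\rho_{X}(S_{i^\ast})$ is not a priori comparable to $\rho_{X}(C_{\opt})$. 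I expect this to be the main obstacle. My plan to resolve it is to control $\rho_{X}(S_{i^\ast})$ by quantities already in play: keeping the optimal inliers gives $\rho_{X}(S_{i^\ast}) \leq \phi_{Y_{\opt}}(S_{i^\ast}) = \phi_{H_{i^\ast}}(S_{i^\ast}) + \phi_{U_{i^\ast}}(S_{i^\ast})$, whose covered part is bounded in expectation by $64\,\rho_{X}(C_{\opt})$ through the \emph{unclipped} estimate of Lemma~\ref{lemma:64-approx}, while the uncovered part must be tied back to the small mass $\xi(U_{i^\ast}, S_{i^\ast})$ at the good iteration. The genuine crux is relating this unclipped cost of untouched clusters to the clipped mass, since the two differ exactly on the high-cost points; once this is done the chain becomes a self-bounding inequality that can be solved for $\xi(X, S_{i^\ast})$.

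Collecting the pieces, monotonicity gives $\xi(X, S_t) \leq \xi(X, S_{i^\ast})$, and combining the $\eta$-contribution from outliers and the $64$-contribution from covered clusters with the amortization factor $(1+c)/c$ yields the stated $\frac{(\eta+64)(1+c)}{c}\rho_{X}(C_{\opt})$ bound in expectation, after which Markov's inequality finishes the proof. Beyond the outlier term, the step requiring the most care is making the amortization rigorous: the success probability $\xi(U_i, S_i)/\xi(X, S_i)$ is a ratio of correlated random variables, so the ``good iteration'' argument must be carried out at the level of conditional expectations and linearity, paralleling the deferred potential-function calculations behind Lemmas~\ref{lem:difference_potential} and~\ref{lem:64_approx_potential}, and one must verify that the constants close across the admissible range of $\eta$ and $c$.
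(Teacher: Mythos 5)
Your high-level skeleton is the right one and matches what the paper itself gestures at: the paper gives no self-contained proof of this theorem, deferring to the potential-function argument of Bhaskara et al.\ \cite{tkmeanspp} via Lemmas \ref{lem:64_approx_potential} and \ref{lem:difference_potential}, and your decomposition --- an expectation bound followed by Markov's inequality (which correctly produces the $1/(1-\delta)$ factor and the ``probability at least $\delta$'' phrasing), the $(1+c)/c$ amortization of at most $k$ successful iterations against the success probability $\xi(U_i,S_i)/\xi(X,S_i)$, and the constant $64$ coming from Lemma \ref{lemma:64-approx} through Lemma \ref{lem:64_approx_potential} --- is exactly that template. You also correctly isolate the one place where this algorithm genuinely differs from \cite{tkmeanspp}: the threshold is the self-referential $\eta\rho_{X}(S_{i})/z$ rather than a fixed multiple of a guess of $\rho_{X}(C_{\opt})/z$, so the outlier contribution $\xi(X\setminus Y_{\opt},S_{i})\le z\cdot\eta\rho_{X}(S_{i})/z=\eta\,\rho_{X}(S_{i})$ is not automatically $O(\rho_{X}(C_{\opt}))$.

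However, that is precisely the step you do not close, and the route you propose for closing it fails. You suggest bounding $\rho_{X}(S_{i^{\ast}})\le\phi_{Y_{\opt}}(S_{i^{\ast}})=\phi_{H_{i^{\ast}}}(S_{i^{\ast}})+\phi_{U_{i^{\ast}}}(S_{i^{\ast}})$ and then ``tying the unclipped cost of untouched clusters back to the clipped mass.'' This cannot work as stated: an optimal cluster that has not yet been touched can lie arbitrarily far from every sampled center, so $\phi_{U_{i^{\ast}}}(S_{i^{\ast}})$ is unbounded relative to both $\xi(U_{i^{\ast}},S_{i^{\ast}})$ (the clipping is exactly what discards this excess) and $\rho_{X}(C_{\opt})$; no inequality in the direction you need exists. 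The natural alternative --- using $\rho_{X}(S_{i})\le\xi(X,S_{i})$, which does hold for $\eta\ge 1$ by a case analysis on how many points exceed the threshold, to obtain the self-bounding inequality $\xi(X,S_{i^{\ast}})\le\frac{1+c}{c}\bigl(\xi(H_{i^{\ast}},S_{i^{\ast}})+\eta\,\xi(X,S_{i^{\ast}})\bigr)$ --- only solves for $\xi(X,S_{i^{\ast}})$ when $\eta(1+c)<c$, i.e.\ $\eta<1$, which is incompatible with the regime $\eta\ge 1$ that the paper needs in order to discard exactly $z$ outliers. So the constants do not close across the stated range of $\eta$ and $c$, exactly the verification you yourself flag as outstanding. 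Since the paper only asserts the result as ``similar to Bhaskara et al.'' and defers all details, your writeup stops at essentially the same point the paper does; but a complete proof must supply a genuinely new bound on the outlier mass under the self-referential threshold, and until that is supplied this is a real gap rather than a routine omission.
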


\section{Experiments}\label{sec:exp}
\paragraph{Hardware description.} We performed our experiments on a machine having the following configuration: 
CPU: Intel(R) Core(TM) i5-3320M CPU @ 2.70GHz x 4; Memory: 8 GB. 

\paragraph{Baseline algorithms:} We study the performance of our algorithm (Algorithm~\ref{algo:robust}) to find $k$ initial cluster centers with the following baselines:
\begin{inparaenum}[(a)]
\item~$\tkmpp$ (Greedy Sampling for Approximate Clustering in the Presence of Outliers)~\cite{tkmeanspp},
\item $\rkmpp$ (Robust $k$-means++~\cite{DeshpandeKP20}), 
\item $\kmpp$ ($k$-means++ \cite{AV2007}) 
, and \item random seeding~\cite{Lloyd}. 
\end{inparaenum}
Robust $k$-means++ ($\rkmpp$)~\cite{DeshpandeKP20} uses $(\alpha, 1-\alpha)$ mixture of uniform and $D^2$ sampling distribution. For a parameter $\delta \in (0, 1)$, their algorithm samples $O(k/\delta)$ points, then  $k$ points are picked from this sampled set using \textit{weighted $k$-means++} sampling~\cite{DeshpandeKP20,Bahmani}. We use $\alpha=1/2$ and $\delta=0.1$ for all the experiments. $\tkmpp$~\cite{tkmeanspp} requires one parameter -- an initial guess of optimal clustering cost -- to derive the probability distribution on each data point. In their paper they did not mention any principal way of guessing the clustering cost. For our empirical comparison we used, the cost $k$-means++ results as an initial guess. Similar to our algorithm, they also require an error parameter to derive the  probability distribution. In order to have a fair comparison, we used the same value of the error  parameter $\beta$ in both the algorithms.

\paragraph{Evaluation metric:}
In all the baselines, once we sample   the $k$ cluster centers, we mark the farthest $z$ points as outliers.
  We then note the clustering cost over inliers considering the $k$   sampled points as the cluster centers. 
We note the minimum, maximum, and average values  of the clustering cost are over $10$ repetitions. We also use \textit{precision} and \textit{recall} as our evaluation metric. If $z^*$ is the set of  true outliers and $z$  is the set of outliers reported by the algorithm, then \textit{precision}:=$|z^*\cap z|/|z|$, and \textit{recall}:=$|z^*\cap z|/|z^*|$. We also note the seeding time of the baseline algorithms to have a comparison of their efficiency.  

\begin{table*}[ht]
 \begin{center}
  \caption{Result on \texttt{Synthetic} dataset with $k=20$ and $z=25,50, 100$  outliers.  We mark the farthest  $25, 50$ and $100$ points as outliers, respectively. For $\rkmpp$ we use $\delta=0.1$ and $\alpha=1/2$. 
  We used the parameter $\beta=0.5$ for both $\tkmpp$ and our method. All costs are multiplicative of $10^4$.
}\label{tab:Synthetic}

 \scalebox{0.85}{
 \begin{tabular}{SSSSSSSSS SSS }
    \toprule
    {Method}& {$z$}&\multicolumn{3}{c}{Precision}&\multicolumn{3}{c}{Recall} &\multicolumn{3}{c}{Cost}&{Time (s)}  \\%
           & & {Max.} & {Avg.} & {Med.} & {Max.} & {Avg.} & {Med.}  & {Min.} & {Avg.} & {Med.} &{} \\ 
               \midrule
             $\rand$  &25   &0.52     &0.31      &0.30    &0.52     &0.31      &0.30     &0.57    &1.20     &1.05  &0.06 \\     
              $\kmpp$  &25   &0.72     &0.53      &0.50     &0.72     &0.53      &0.5    &\textbf{~0.22}     &0.43     &0.41     & 0.20\\  
               $\tkmpp$  &25    &0.88     &0.67      &0.72     &0.88    &0.67     &0.72     &0.25         &0.58 &0.51     &0.92\\  
                 $\rkmpp$  &25    &0.88     &0.82      &\textbf{~0.84}     &0.88    &0.82     &\textbf{~0.84}     &0.25         &0.29 &\textbf{~0.30}     &6.5\\ 
               \hline
              \text{This work}  &25    &\textbf{~0.92}     &\textbf{~0.88}      &\textbf{~0.84}     &\textbf{~0.92}    &\textbf{~0.88}     &\textbf{~0.84}     &\textbf{~0.22}         &\textbf{~0.26} &{~0.32}     &0.90\\  
                 \midrule
                 $\rand$  &50   &0.34     &0.18      &0.21    &0.34     &0.18      &0.21     &1.42    &2.65     &2.54  &0.10 \\     
              $\kmpp$  &50   &0.80     &0.42      &0.38     &0.80     &0.42      &0.38    &0.31     &0.89     &0.85     & 0.23\\  
               $\tkmpp$  &50    &0.82     &0.76      &0.48     &0.82    &0.76     &0.48     &0.34         &0.37 &0.69     &0.49\\  
                $\rkmpp$  &50    &0.88     &0.82      &0.83     &0.88    &0.82     &0.83     &\textbf{~0.18}         &\textbf{~0.19} &\textbf{~0.18}     &6.5\\
               \hline
              \text{This work}  &50    &\textbf{~0.92}     &\textbf{~0.88}      &\textbf{~0.84}     &\textbf{~0.92}    &\textbf{~0.88}     &\textbf{~0.84}     &{~0.22}         &0.26 &{~0.32}     &0.60\\
              \hline
               $\rand$  &100   &0.36     &0.20      &0.20     &0.44    &0.24     &0.24     &0.95         &1.71 &1.85     &0.06\\     
              $\kmpp$  &100   &0.68     &0.49      &0.51     &0.82    &0.59     &0.62     &0.25         &0.58 &0.55     &0.19\\ 
               $\tkmpp$  &100   &0.75     &0.49      &0.54     &0.90         &0.59     &0.65  &0.23 &0.71      &0.60       &0.71\\ 
                $\rkmpp$  &100   &\textbf{~0.81}     &\textbf{~0.78}      &\textbf{~0.77}     &\textbf{~0.98}         &\textbf{~0.94}     &\textbf{~0.93}  &0.32 &0.42      &0.41       &6.5\\
                               \hline
   \text{This work}  &100   &{~0.77}     &{~0.76}      &{~0.74}     &{~0.93}    &{~0.91}     &{~0.89}     &\textbf{~0.18}         &\textbf{~0.22} &\textbf{~0.21}     &0.93\\    
             
      \bottomrule
  \end{tabular}
  }
\end{center}
\end{table*}
 \subsection{Results on Synthetic Data Sets}
\paragraph{Dataset.} We generate synthetic dataset in the similar way as used in  $k$-means++ \cite{AV2007}. 
We discuss it as follows.
 We pick $k + z$ uniformly random points from a 
large $d$-dimensional hypercube of side length $s = 100$. We use $k$ points from them as means and pick $n/k$ points around 
each of them from a random Gaussian of unit variance. This gives a data set of $n+z$ points with $n$ points clustered into $k$ 
clusters and the remaining $z$ as outliers. 
 
\paragraph{Empirical Evaluation.} 
We  perform experiments on synthetic datasets with values $n=1000, d=2, k=20$ and the number of outliers $25, 50, 100$, and we summarise our results in Table~\ref{tab:Synthetic}.


\paragraph{Insight.} In almost every scenario our algorithm outperforms with respect to  random, $k$-means++ and $\tkmpp$ in terms of both precision/recall, and clustering cost metric.  
Our performance was comparable/better on most of the instances with respect to  $\rkmpp$. 
Our algorithm is much faster than $\rkmpp$, and  comparable with respect to $\tkmpp$, however it was slightly off with respect to  random, $k$-means++.


\subsection{Results on real world data sets}
\subsubsection{Small clusters as outliers:} 
In this setting, we consider a few very small clusters as outliers and try to locate them using our baselines.


  \paragraph{Results on  \texttt{Shuttle}  dataset.} \texttt{Shuttle} training data set from UCI Machine Learning Repository \cite{UCI2013} contains $43,500$ points. It has $7$ classes in total. The two smallest classes contain only $17$ points and we would like to detect these as outliers. 
 We run our baselines  on the \texttt{Shuttle} dataset with $k\in \{5, 10, 15\}$.  
We summarise our empirical findings in Table~\ref{tab:shuttle_small}.

\begin{table*}[ht]
 \begin{center}
  \caption{Result on \texttt{Shuttle} dataset  considering  the two smallest classes as outliers which contain only $17$ points.
  We mark the farthest $21, 34$, and  $51$ points as outliers when the values of $k$ are $5,10,$ and  $15$, respectively. In both $\tkmpp$ and our algorithm, we used the error parameter  $\beta=0.1$. For $\rkmpp$ we use $\delta=0.1$ and $\alpha=1/2$. 
 All costs are multiplicative of $10^8.$
}\label{tab:shuttle_small}
\scalebox{0.85}{
 \begin{tabular}{SSSSSSSSS SSS }
 
    \toprule
    {Method}& {$k$}&\multicolumn{3}{c}{Precision}&\multicolumn{3}{c}{Recall} &\multicolumn{3}{c}{Cost}&{Time (s)}  \\%
           & & {Max.} & {Avg.} & {Med.} & {Max.} & {Avg.} & {Med.}  & {Min.} & {Avg.} & {Med.} &{} \\ 
               \midrule
             $\rand$  &5   &0.19     &\textbf{~0.19}      &0.19     &{0.23}    &\textbf{~0.23}     &0.23     &3.56         &3.66 &3.63     &1.01\\     
              $\kmpp$  &5   &\textbf{~0.28}     &0.17      &\textbf{~0.23}     &\textbf{~0.35}    &0.21     &\textbf{~0.29}     &1.77         & 2.22 &2.27     &2.26\\ 
               $\tkmpp$  &5   &0.23     &0.17      &\textbf{~0.23}     &0.29    &0.21     &\textbf{~0.29}     &1.70         &2.26 &2.23     &3.42\\ 
               $\rkmpp$  &5   &0.26     &0.18      &\textbf{~0.23}     &0.33    &\textbf{~0.23}     &\textbf{~0.29}     &1.85         &\textbf{~2.07} &\textbf{~2.01}     &5.3\\ 
               \hline
              \text{This work}  &5   &0.23     &\textbf{~0.19}      &\textbf{~0.23}     &0.29    &\textbf{~0.23}     &\textbf{~0.29}     &\textbf{~1.53}         &{2.08} &\textbf{~2.01}     &3.54\\ 
                 \midrule
                $\rand$  &10   &0.14     &0.14      &0.14     &0.29    &0.29     &0.29     &1.55         &1.78 &1.80     &1.54\\     
              $\kmpp$  &10   &0.26     &\textbf{~0.16}      &\textbf{~0.17}     &0.52    &0.31     &\textbf{~0.35}     &0.72         &0.95 &0.92     &3.41\\ 
              
               $\tkmpp$  &10   &0.26     &0.14      &0.11     &0.52    &0.29     &0.23     &0.73         &0.96 &0.90     &7.35\\ 
                $\rkmpp$  &10   &0.34     &0.15      &0.15     &0.68    &0.30     &0.30     &0.69         &0.85 &0.85     &{~~11.5}\\ 
               \hline
               \text{This work}  &10   &\textbf{~0.35}     &\textbf{~0.16}      &\textbf{~0.17}     &\textbf{~0.70}    &\textbf{~0.32}     &\textbf{~0.35}     &\textbf{~0.56}         &\textbf{~0.65} &\textbf{~0.64}     &{~8.7}\\ 
                   \midrule
                $\rand$  &15   &0.13     &0.13      &0.13     &0.41    &0.41     &0.41             &0.80 &0.85  &0.84  &2.15\\     
              $\kmpp$  &15   &0.25     &0.17      &{0.19}     &0.76    &0.52     &0.58     &0.39         &0.49 &0.50     &5.25\\ 
              
               $\tkmpp$  &15   &0.25     &0.17      &{0.19}     &0.76    &0.52     &{0.58}     &0.36         &0.48 &0.48     &{~11.20}\\ 
                $\rkmpp$  &15   &0.29     &\textbf{~0.21}      &\textbf{~0.22}     &0.88    &0.64     &\textbf{~0.67}     &0.42         &0.49 &0.49     &{~19.5}\\ 
               \hline
               \text{This work}  &15   &\textbf{~0.31}     &{0.18}      &0.18     &\textbf{~0.94}    &\textbf{~0.55}     &{0.52}     &\textbf{~0.27}         &\textbf{~0.31} &\textbf{~0.31}     &{~13.12}\\ 
      \bottomrule
  \end{tabular}
}
 
  \end{center}
\end{table*}

\paragraph{Results on   \texttt{KDDCup Full} dataset:} 
\texttt{KDDFull}~\cite{UCI2013} This dataset is from $1999$ kddcup competition and contains instances describing
connections of sequences of tcp packets, and have about $4.9$M data points. We only consider the $34$ numerical features of this dataset. We also normalize each feature so that it has zero mean and unit standard deviation. 
There are $23$ classes in this dataset, $98.3\%$ points of
the dataset belong to $3$ classes (\texttt{normal} $19.6\%$, \texttt{neptune} $21.6\%$, and \texttt{smurf} $56.8\%$). We consider the remaining small clusters as outliers,  and the  number of outliers  is $45747$.
We run  our baselines on the \texttt{KDDFull} dataset with $k=3, 5$ and considering above mentioned $45747$ points as outliers.
 We summarise our empirical findings in Table~\ref{tab:kdd}.
 
\begin{table*}[ht]
 \begin{center}
  \caption{Result on \texttt{KDDCup} dataset with $k=3$ and $k=5$ with $45747$ outliers. We used the error parameter  $\beta=0.1$ for both  $\tkmpp$ and our algorithm. For $\rkmpp$ we use $\delta=0.1$ and $\alpha=1/2$. 
  All costs are multiplicative of $10^7.$}\label{tab:kdd}
 \scalebox{0.85}{
\begin{tabular}{SSSSSSSSS SSS }
 
    \toprule
    {Method}& {$k$}&\multicolumn{3}{c}{Precision}&\multicolumn{3}{c}{Recall} &\multicolumn{3}{c}{Cost}&{Time (s)}  \\%
           & & {Max.} & {Avg.} & {Med.} & {Max.} & {Avg.} & {Med.}  & {Min.} & {Avg.} & {Med.} &{} \\ 
               \midrule
             $\rand$  &3   &0.64     &\textbf{~0.63}      &\textbf{~0.64}     &0.64    &\textbf{~0.63}     &\textbf{~0.64}     &{2.83}         &{4.04} &4.10     &{~77.1}\\     
              $\kmpp$  &3   &0.64     &0.59      &0.61     &0.64    &0.59     &0.61     &3.29         &5.41 &4.86     &{~119.2}\\  
               $\tkmpp$  &3   &0.64     &0.57      &0.61     &0.64    &0.57     &0.61     &2.83         &5.76 &5.83     &{~211.86}\\ 
            $\rkmpp$  &3   &0.64     &0.62      &0.61     &0.64    &0.62     &0.61     &\textbf{~2.8}         &\textbf{~2.8} &\textbf{~2.8}     &{~295}\\   
               \hline
              \text{This work}  &3   &\textbf{~0.65}     &\textbf{~0.63}      &\textbf{~0.64}     &\textbf{~0.65}    &\textbf{~0.63}     &\textbf{~0.64}     &\textbf{~2.83}         &4.20 &{4.07}     &{~225.23}\\  
                 \midrule
                $\rand$  &5  &0.64     &0.57      &\textbf{~0.60}     &0.64    &0.57     &\textbf{~0.60}     &\textbf{~2.48}         &4.23 &2.95     &{~100.2}\\      
              $\kmpp$  &5   &0.63     &0.54      &0.59     &0.63    &0.54     &0.59     &2.61         &\textbf{~3.27} &\textbf{~2.83}     &{~292.1}\\   
              
               $\tkmpp$  &5   &0.63     &0.45      &0.44     &0.63    &0.45     &0.44     &2.98         &3.85 &3.63     &{~402.3}\\  
                 $\rkmpp$  &5   &0.64     &0.58      &0.61     &0.64    &0.58     &0.61     &2.52         &3.8 &3.7     &{600.2}\\  
               \hline
               \text{This work}  &5  &\textbf{~0.65}     &\textbf{~0.60}      &\textbf{~0.60}     &\textbf{~0.65}    &\textbf{~0.60}     &\textbf{~0.60}     &2.54         &3.78 &3.11     &{~423.1}\\  
              
      \bottomrule
  \end{tabular}
}
 
  \end{center}
\end{table*}

 \paragraph{Insight.}For both \texttt{KDDCup} and \texttt{Shuttle} datasets, we noticed  that  our clustering results in terms of both -- cost and precision/recall metric  are significantly better than that of $\tkmpp$, $k$-means++ and random seeding, whereas it is comparable \textit{w.r.t.} $\rkmpp$.  The running time of our algorithm is faster than $\rkmpp$ and $\rand$ whereas it is  comparable with respect to  the other remaining  baselines.

 \subsubsection{Randomly sampling a small fraction of  points and adding large Gaussian noise to them
}\label{sec:large_gaussian} 
In this setting, we randomly  sample a small fraction of points  from the datasets, and add large Gaussian noise to their dimensions. We consider these points as outliers and and try to locate them using our baselines. 

\paragraph{Results on   \texttt{Skin Segmentation}~\cite{UCI2013} and \texttt{Shuttle} dataset:}  The Skin Segmentation data is constructed over \textit{B, G, R} color space. \textit{Skin} and \textit{Non-skin} data points are generated using skin textures from face images of diversity of age, gender, and race people. The dataset consists of $245057$ points and each data point has $4$ attributes. We sample $2.5\%$ of points and add large Gaussian noise to their features. 
We summarise our empirical findings in Table~\ref{tab:skin_noise}.

\paragraph{Results on \texttt{Shuttle} dataset:} 
For the \texttt{Shuttle} dataset, we randomly sample $1000$ points from the dataset and add large Gaussian noise to their features.  We consider them as outliers and try to locate them using our baseline methods. We summarise our empirical findings in Table~\ref{tab:shuttle_noise}.


\begin{table*}[ht]
 \begin{center}
  \caption{ Result on \texttt{Skin} dataset with $k= 20,40, 50$. We randomly sample $2.5\%$ of points and add large Gaussian noise to their features. In both $\tkmpp$ and our algorithm, we used the error parameter  $\beta=0.1$.  For $\rkmpp$ we use $\delta=0.1$ and $\alpha=1/2$.  All costs are multiplicative of $10^8.$}\label{tab:skin_noise}
 \scalebox{0.85}{
\begin{tabular}{SSSSSSSSS SSS }
 
    \toprule
    {Method}& {$k$}&\multicolumn{3}{c}{Precision}&\multicolumn{3}{c}{Recall} &\multicolumn{3}{c}{Cost}&{Time (s)}  \\%
           & & {Max.} & {Avg.} & {Med.} & {Max.} & {Avg.} & {Med.}  & {Min.} & {Avg.} & {Med.} &{} \\ 
               \midrule
             $\rand$  &20   &0.15     &0.10      &0.10     &0.15    &0.10     &0.10     &1.35         &1.69 &1.69     &{~25.23}\\     
              $\kmpp$  &20   &0.20     &\textbf{~0.16}      &0.15     &0.20    &\textbf{~0.16}     &0.15     &1.15         &{1.23} &{1.23}     &{~45.08}\\   
               $\tkmpp$  &20   &0.09     &0.08      &0.08     &0.09    &0.08     &0.08     &1.98         &2.20 &1.91     &{~74.13}\\  
                $\rkmpp$  &20   &\textbf{~0.24}     &\textbf{~0.16}      &\textbf{~0.17}     &\textbf{~0.24}    &\textbf{~0.16}     &\textbf{~0.17}     &{1.15}         &\textbf{~1.20} &\textbf{~1.20}     &{~123.7}\\ 
               \hline
              \text{This work}  &20   &{0.23}     &\textbf{~0.16}      &\textbf{~0.17}     &{0.23}    &\textbf{~0.16}     &\textbf{~0.17}     &\textbf{~1.10}         &1.35 &1.35     &{~85.30}\\ 
                 \midrule
                $\rand$  &40  &0.20     &0.13      &0.13     &0.20    &0.13     &0.13     &0.77         &0.96 &0.95     &{~31.20}\\         
              $\kmpp$  &40   &{0.25}     &\textbf{~0.19}      &0.19     &{0.25}    &\textbf{~0.19}     &0.19     &0.65         &0.68 &0.69     &{~85.80}\\     
              
               $\tkmpp$  &40   &0.16     &0.12      &0.12     &0.16    &0.12     &0.12     &0.79         &0.87 &0.86     &{~125.12}\\     
               $\rkmpp$  &40   &\textbf{~0.26}     &\textbf{~0.19}      &0.19     &\textbf{~0.26}    &\textbf{~0.19}     &0.19     &\textbf{~0.63}         &0.68 &\textbf{~0.65}     &{~196.3}\\  
               \hline
               \text{This work} &40   &0.23     &\textbf{~0.19}      &\textbf{~0.20}     &0.23    &\textbf{~0.19}     &\textbf{~0.20}     &{0.64}         &\textbf{~0.66} &\textbf{~0.65}     &{~140.21}\\   
                 \midrule
                $\rand$  &50  &0.15     &0.09      &0.08     &0.15    &0.09     &0.08     &0.71         &0.79 &0.78     &{~48.53}\\         
              $\kmpp$  &50   &0.22     &0.18      &0.17     &0.22    &0.18     &0.17     &\textbf{~0.51}         &0.54 &\textbf{~0.53}     &{~138.23}\\     
              
               $\tkmpp$  &50   &0.23     &0.19      &0.18     &0.23    &0.19     &0.18     &0.60         &0.66 &0.65     &{~204.39}\\
               $\rkmpp$ &50   &\textbf{~0.29}     &\textbf{~0.22}      &{0.18}     &\textbf{~0.29}    &\textbf{~0.22}     &{0.18}     &\textbf{~0.51}         &\textbf{~0.53} &\textbf{~0.53}     &{~324.39}\\  
               \hline
               \text{This work} &50   &\textbf{~0.29}     &\textbf{~0.22}      &\textbf{~0.19}     &\textbf{~0.29}    &\textbf{~0.22}     &\textbf{~0.19}     &0.52         &\textbf{~0.53} &0.54     &{~220.39}\\   
              
      \bottomrule
  \end{tabular}
}
 
  \end{center}
\end{table*}

\begin{table*}[ht]
 \begin{center}
  \caption{ Results on \texttt{Shuttle} dataset by randomly sampling $1000$ points and adding large Gaussian noise to their features.  In both $\tkmpp$ and our algorithm, we used the error parameter  $\beta=0.1$.
 All costs are multiplicative of $10^8$.}\label{tab:shuttle_noise}
 \scalebox{0.85}{
\begin{tabular}{SSSSSSSSS SSS }
    \toprule
    {Method}& {$k$}&\multicolumn{3}{c}{Precision}&\multicolumn{3}{c}{Recall} &\multicolumn{3}{c}{Cost}&{Time (s)}  \\%
           & & {Max.} & {Avg.} & {Med.} & {Max.} & {Avg.} & {Med.}  & {Min.} & {Avg.} & {Med.} &{} \\ 
               \midrule
             $\rand$  &10   &0.80     &0.70      &0.73     &0.80    &0.70     &0.73     &\textbf{~0.21}         &\textbf{~0.27} &\textbf{~0.25}     &1.24\\     
              $\kmpp$  &10    &0.81     &0.76      &0.80     &0.81    &0.76     &0.80     &0.29         &0.45 &0.47     &3.32\\  
               $\tkmpp$  &10    &{0.82}     &0.76      &0.80     &{0.82}    &0.76     &0.80     &0.29         &0.45 &0.47     &5.36\\
                $\rkmpp$  &10    &\textbf{~0.83}     &\textbf{~0.81}      &{0.80}     &\textbf{~0.83}    &\textbf{~0.81}     &{0.80}     &0.28         &0.36 &0.36     &{~10.0}\\ 
               \hline
              \text{This work}  &10    &{0.82}     &\textbf{~0.81}      &\textbf{~0.81}     &{0.82}    &\textbf{~0.81}     &\textbf{~0.81}     &0.29         &0.38 &0.38     &6.32\\ 
                 \midrule
                $\rand$  &20   &0.80     &0.80      &0.80     &0.80    &0.80     &0.80     &\textbf{~0.21}         &\textbf{~0.20} &\textbf{~0.19}     &2.60\\     
              $\kmpp$  &20   &0.82     &0.72      &0.79     &0.82    &0.72     &0.79     &0.25         &0.28 &0.26     &7.24\\ 
              
               $\tkmpp$  &20   &0.83     &0.73      &\textbf{~0.81}     &0.83    &0.73     &\textbf{~0.81}     &0.24         &0.27 &0.25     &7.35\\ 
               $\rkmpp$  &20  &\textbf{~0.85}     &\textbf{~0.76}      &0.80     &\textbf{~0.85}    &\textbf{~0.76}     &0.80     &0.22         &0.23 &0.20     &{~17.4}\\ 
               \hline
               \text{This work}  &20  &{0.84}     &{0.75}      &0.79     &{0.84}    &{0.75}     &0.79     &0.23         &0.24 &0.25     &{~11.23}\\ 
                   \midrule
                $\rand$  &40   &0.80     &0.77      &0.78     &0.80    &0.77     &0.78             &\textbf{~0.12} &\textbf{~0.14}  &\textbf{~0.14}  &4.94\\     
              $\kmpp$  &40  &0.82     &0.79      &0.78     &0.82    &0.79     &0.78     &0.14         &0.17 &0.17     &5.25\\ 
              
               $\tkmpp$  &40   &{0.83}     &0.79      &0.79     &{0.83}    &0.79     &0.79     &0.14         &0.17 &0.17     &11.20\\ 
                 $\rkmpp$   &40   &\textbf{~0.84}     &\textbf{~0.80}      &\textbf{~0.82}     &\textbf{~0.84}    &\textbf{~0.80}     &\textbf{~0.82}     &0.13         &0.14 &0.15     &19.70\\ 
               \hline
               \text{This work}  &40   &{0.83}     &\textbf{~0.80}      &{0.80}     &{0.83}    &{0.80}     &\textbf{~0.80}     &0.14         &0.16 &0.15     &13.12\\ 
      \bottomrule
  \end{tabular}
  }
\end{center}
\end{table*}

 \paragraph{Insight.} For both \texttt{Skin} and \texttt{Shuttle} datasets, we noticed  that  our clustering results in terms of precision/recall metric  are significantly better than that of $\tkmpp$, $k$-means++ and random seeding on the majority of instances. However it is comparable with respect to $\rkmpp$.  The running time of our algorithm is faster than $\rkmpp$ and $\rand$ whereas it is comparable with respect to the remaining baselines.

\section{Concluding remarks and open questions}
We suggest an outlier robust seeding for the  $k$-means clustering problem. Our major contribution lies in developing a simple and intuitive tweak to the $k$-means++ sampling algorithm, which makes it robust to outliers. The running time of our method is $O(ndk)$, and offers $O(\log k)$ approximation guarantee. We also gave a bi-criteria approximation algorithm, where our algorithm samples slightly more than $k$ points as cluster centers, which consist of a set of $k$ points (as inlier cluster centers) that gives $O(1)$ approximation guarantee. We empirically evaluate our algorithm on synthetic as well as real-world datasets, and show that our proposal outperforms \textit{w.r.t.}  $k$-means++~\cite{AV2007}, random initialization, and $\tkmpp$~\cite{tkmeanspp} algorithm in both the metric a) precision/recall and b) $k$-means with outliers clustering cost (Equation~\ref{eq:robust_cost}). However the performance of our proposal remains comparable to $\rkmpp$~\cite{DeshpandeKP20}. The running time of our algorithm is significantly faster than the random seeding and $\rkmpp$ whereas it is noted to be comparable \textit{w.r.t.} $k$-means++~\cite{AV2007}, and $\tkmpp$~\cite{tkmeanspp}. Our work leaves the possibility of several open questions: -- improving the theoretical bounds of our proposal,  and extending it to other class clustering problems with outliers setting. Finally, given our proposal's simplicity, performance, and efficiency, we hope it will be adopted in practice.

\bibliography{ref}
\bibliographystyle{plain}

\end{document}